\documentclass{article} 
\usepackage{iclr2024_conference,times}


\usepackage{amsmath,amsfonts,bm}









\def\eqref#1{equation~\ref{#1}}









\def\1{\bm{1}}

\def\eps{{\epsilon}}










\DeclareMathAlphabet{\mathsfit}{\encodingdefault}{\sfdefault}{m}{sl}
\SetMathAlphabet{\mathsfit}{bold}{\encodingdefault}{\sfdefault}{bx}{n}













\DeclareMathOperator*{\argmin}{arg\,min}

\usepackage[colorlinks,citecolor=blue,bookmarks=true]{hyperref}
\usepackage[capitalise]{cleveref}
\usepackage{url}


\usepackage{xspace}
\usepackage{amsfonts,amsmath,amssymb,amsthm,pbox}

\usepackage{multirow}
\usepackage{dsfont} 

\usepackage{algorithmic,algorithm}

\usepackage[shortlabels]{enumitem}
\setitemize{noitemsep,topsep=0pt,parsep=0pt,partopsep=0pt}
\setenumerate{noitemsep,topsep=0pt,parsep=0pt,partopsep=0pt}

\makeatletter
\@ifundefined{theorem}{
  \theoremstyle{definition}
  \newtheorem{definition}{Definition}
  \newtheorem{assumption}{Assumption}
  \theoremstyle{plain}
  \newtheorem{theorem}{Theorem}
  
  \newtheorem{lemma}{Lemma}

  \theoremstyle{remark}

}{}
\makeatother
\newenvironment{proofof}[1]{\noindent{\bf Proof of {#1}:~~}}{\(\qed\)}











\newcommand{\ignore}[1]{}









\newcommand{\EE}{\mathbb{E}}

\newcommand{\RR}{\mathbb{R}}





\def \cA     {{\cal A}}
\def \cB     {{\cal B}}

\def \cD     {{\cal D}}

\def \cN     {{\cal N}}

\def \cR     {{\cal R}}

\def \cX     {{\cal X}}
\def \cY     {{\cal Y}}






\newcommand{\eg}{\textit{e.g.,}\xspace}
\newcommand{\ie}{\textit{i.e.,}\xspace}  














\def \Paren#1{{\left({#1}\right)}}




\newcommand{\eqdef}{{:=}}












\newcommand{\probof}[1]{\Pr\Paren{#1}}



\def\ignore#1{}









\newcommand{\bi}{\begin{itemize}}
\newcommand{\ei}{\end{itemize}}



\def\orpro{\mathop{\mathchoice
   {\vee\kern-.49em\raise.7ex\hbox{$\cdot$}\kern.4em}
   {\vee\kern-.45em\raise.63ex\hbox{$\cdot$}\kern.2em}
   {\vee\kern-.4em\raise.3ex\hbox{$\cdot$}\kern.1em}
   {\vee\kern-.35em\raise2.2ex\hbox{$\cdot$}\kern.1em}}\limits}

\def\andpro{\mathop{\mathchoice
 {\wedge\kern-.46em\lower.69ex\hbox{$\cdot$}\kern.3em}
 {\wedge\kern-.46em\lower.58ex\hbox{$\cdot$}\kern.25em}
 {\wedge\kern-.38em\lower.5ex\hbox{$\cdot$}\kern.1em}
 {\wedge\kern-.3em\lower.5ex\hbox{$\cdot$}\kern.1em}}\limits}

\def\simge{\mathrel{%
   \rlap{\raise 0.511ex \hbox{$>$}}{\lower 0.511ex \hbox{$\sim$}}}}

\def\simle{\mathrel{
   \rlap{\raise 0.511ex \hbox{$<$}}{\lower 0.511ex \hbox{$\sim$}}}}





\usepackage{tablefootnote}
\usepackage{bbm}
\usepackage{booktabs}



\newcommand{\vspan}{\text{span}}
\newcommand{\rank}{{\rm rank}}
\newcommand{\prv}{\text{prv}}

\newcommand{\clip}{\mathrm{Clip}}

\newcommand{\indic}[1]{\mathbbm{1} \left\{ #1 \right\}}

\newcommand{\ns}{n}

\renewcommand{\eps}{\varepsilon}

\usepackage{natbib}
\usepackage{graphicx}
\usepackage{xcolor}

\def\withcolors{0}
\def\withnotes{0}
\def\withcolorsnew{1}

\ifnum\withnotes=1
\newcommand{\zs}[1]{{\noindent \textit{\small\textcolor{orange}{ziteng: #1}}}}
\newcommand{\ts}[1]{{\noindent \textit{\small\textcolor{red}{theertha: #1}}}}
\newcommand{\todo}[1]{{\noindent \textit{\small\textcolor{blue}{TODO: #1}}}}
\else
\newcommand{\zs}[1]{{}}
\newcommand{\ts}[1]{{}}
\newcommand{\todo}[1]{{}}
\fi
\ifnum\withcolors=1
\newcommand{\new}[1]{{\textcolor{red}{#1}}}

\else
\newcommand{\new}[1]{{{#1}}}

\fi

\ifnum\withcolorsnew=1
\newcommand{\newer}[1]{{\textcolor{black}{#1}}}
\else
\newcommand{\newer}[1]{{{#1}}}
\fi

\renewcommand{\ignore}[1]{}
\newcommand{\diam}{{\rm diam}}

\title{The importance of feature preprocessing for differentially private linear optimization}


\author{Ziteng Sun, Ananda Theertha Suresh, Aditya Krishna Menon \\
Google Research, New York\\
\texttt{\{zitengsun,theertha,adityakmenon\}@google.com}
}

%


\iclrfinalcopy 
\begin{document}

\maketitle

\begin{abstract}
Training machine learning models with differential privacy (DP) has received increasing interest in recent years. One of the most popular algorithms for training differentially private models is differentially private stochastic gradient descent (DPSGD) and its variants, where at each step gradients are clipped and combined with some noise. Given the increasing usage of DPSGD, we ask the question: is DPSGD alone sufficient to find a good minimizer for every dataset under privacy constraints? 
Towards answering this question, we show
that even for the simple case of linear classification, unlike non-private optimization, (private) feature preprocessing is vital for differentially private optimization. 
In detail,
we first show theoretically that there exists an example where without feature preprocessing, DPSGD incurs an optimality gap proportional to the maximum \newer{Euclidean} norm of features over all samples. We then propose an algorithm called \textsc{DPSGD-F}, which combines DPSGD with feature preprocessing and prove that for classification tasks, it incurs an optimality gap proportional to the diameter of the features $\max_{x, x' \in D} \|x - x'\|_2$. 
We finally demonstrate the practicality of our algorithm on image classification benchmarks. 

\vspace{-10pt}
\end{abstract}

\section{Introduction}

Differential privacy (DP) \citep{dwork2014algorithmic} has emerged as one of the  standards of privacy in machine learning and statistics. In machine learning, differentially private methods have been used to train 
models for
language modelling \citep{mcmahan2017learning}, image classification \citep{de2022unlocking}, generative diffusion \citep{ghalebikesabi2023differentially}, and private fine-tuning \cite{yu2021differentially, li2021large}. We refer readers to \citet{ponomareva2023dp} for a detailed survey of techniques and current state of the art methods in private optimization. Following the influential work of \citet{abadi2016deep}, differentially private stochastic gradient descent (DPSGD) has emerged as one of the most popular algorithms for training private machine learning models and achieves state of the art results in several datasets \citep{de2022unlocking, ghalebikesabi2023differentially}.

There has also been a recent line of work that focuses on analyzing the theoretical performance of DPSGD and its variants, with a particular focus on convex models~\citep{bassily2019private, feldman2020private, bassily2020stability, bassily2021non, bassily2021differentially,  song2021evading, arora2022differentially}. It has been shown that DPSGD and its variants can achieve min-max optimal rates for the task of DP empirical risk minimization and DP stochastic convex optimization under various geometries. Moreover, for convex generalized linear models, DPSGD has also been shown to achieve dimension-independent convergence rate \citep{song2021evading, arora2022differentially}. This observation can be extended to general convex models under certain assumptions \citep{NEURIPS2022_b75ce884}.

Despite the practical success and appealing theoretical properties of DPSGD, recent empirical results have shown that they may not learn good intermediate features in deep learning image classification tasks \citep{tramer2021differentially}. In fact, it has been observed in \citet{abadi2016deep} that performing a private PCA on the features before performing DPSGD can improve the performance of private training, which highlights that learned features may not be good. 
This raises a fundamental question:

\begin{center}
\begin{minipage}{12cm}
\centering
\textit{
Can private feature preprocessing \textbf{provably} improve DPSGD?}
\end{minipage}
\end{center}
There is no clear intuitive answer to this question.
On the one hand, feature preprocessing accelerates the convergence of gradient methods in optimization \citep{lecun2002efficient},
which can help private optimization since the number of steps is constrained by the privacy requirement. On the other hand, private feature preprocessing will use a portion of the privacy budget, thus decreasing the privacy budget for the DPSGD phase.

As a first step towards answering this question, we show
that for the simple task of linear private classification, unlike non-private convex optimization, feature preprocessing is necessary for private optimization to achieve near instance-optimal results. Our findings are as follows.

\begin{enumerate}
\item We provide an example where DPSGD with any clipping norm, batch size, and learning rate incurs an error proportional to the maximum \newer{Euclidean}\footnote{\newer{When unspecified, we use norm to refer to the Euclidean norm by default.}} norm of feature vectors (Section~\ref{sec:counter}).

\item We propose \textsc{DPSGD-F}, a new algorithm that combines both DPSGD and feature preprocessing, and show that the leading term of the error degrades proportional to the diameter of the dataset, which can be significantly smaller than the maximum norm (Section~\ref{sec:algorithm}). We also complement our result with a near-matching information-theoretic lower bound (Section~\ref{sec:lower}).
\item We empirically validate our findings on a few standard datasets and show that \textsc{DPSGD-F} outperforms DPSGD on a subset of the datasets. For the task of privately finetuning the last layer of a pretrained model (using ImageNet1K) on CIFAR-100 under $\eps = 1$, our result improves the previous accuracy from 70.6\% \citep{de2022unlocking} to 71.6\% (Section~\ref{sec:experiments}).
\end{enumerate}

The rest of the paper is organized as follows. In Section~\ref{sec:prelim}, we outline the problem setting and in Section~\ref{sec:related} discuss prior work and our contribution. In Section~\ref{sec:counter} we provide a counter-example for DPSGD and in Section~\ref{sec:algorithm}, we describe our new algorithm and its performance. In Section~\ref{sec:lower}, we provide an information theoretic lower bound. Finally, in Section~\ref{sec:experiments}, we demonstrate the practicality of the proposed algorithm on image clssification tasks.

\section{Preliminaries}
\label{sec:prelim}

\paragraph{Classification.} 
Let $\cX$ denote the feature space and $\cY = \{-1, 1\}$ be the set of labels. Unless otherwise specified, we set $\cX$ to be a ball of radius $R$ in $d$-dimensional space
denoted by $\cB^d_2(R)$.  Let $h$ be a classifier that takes parameter $\theta$ and feature $x$ and computes a score $h(\theta, x) \in \mathbb{R}$.
The performance of the classifier is measured by a loss function $\ell \colon \mathbb{R} \times \cY \to \mathbb{R}_+$.
We assume $\ell$ is a \emph{margin loss}~\citep{Bartlett:2006} of the form
\begin{equation}\label{eqn:loss_typical}
    \ell(h(\theta, x), y) = \phi(y \cdot h(\theta, x)),
\end{equation}
where $\phi \colon \mathbb{R} \to \mathbb{R}$ is typically a convex, non-increasing function with a bounded value $\phi(0) < \infty$ at 0.
Canonical examples of $\phi$ include the \emph{hinge loss} $\phi( z ) = [ 1 - z ]_+$,
and the \emph{logistic loss} $\phi( z ) = \log( 1 + e^{-z} )$.

Let $D = \{(x_i, y_i)\}_{i \in  [n]}$ be a training set of size $n$.  Typically, the goal is to learn a classifier by minimizing the average loss over all samples given by 
\[
L(\theta, D) \triangleq \frac{1}{n} \sum_{i \in [n]} \ell(h(\theta, x_i), y_i).
\]
We are interested in linear classification models that are described as follows. Let $\theta = (w, b) \in \RR^{d+1}$, where $w \in \RR^{d}$ and $b \in \RR$, and $h((w, b), x_i) = w\cdot x_i + b$. We further assume that $\ell(h(\theta, x), y)$ is convex in $\theta$, and 
that $\forall y$, $\ell(\cdot, y)$ is convex and $G$-Lipschitz in the first argument. The model can be viewed as a generalized linear model (GLM), with additional restrictions on the loss function. 
We denote the minimizer of the empirical loss by
\[
\theta^*_D \in \argmin L(\theta, D).
\]
We will drop the subscript when the dataset is clear from context. The following assumption on the minimizer will be useful in our performance analysis. It assumes that the minimizer is not a trivial solution which does not separate any two data points in the dataset. This is a reasonable assumption as long as $\ell$ is a good proxy loss for classification.
\begin{assumption}[Nontrivial minimizer] 
\label{ass:nontrivial}
The dataset $D$ and loss function $\ell$ satisfies that at the minimizer $\theta^*$, there exists $x, x' \in D$ such that
\[
    h(\theta^*, x) \cdot h(\theta^*, x') \le 0.
\]
\end{assumption}

\paragraph{Differential privacy (DP) \citep{dwork2014algorithmic}.} DP requires the optimization algorithm to output similar outputs for similar training datasets. More precisely, differential privacy is defined below.

\begin{definition}[Differential privacy]
Let $\cD$ be a collection of datasets.
An algorithm $\cA: \cD \to \cR$
is $(\eps, \delta)$-DP if for any datasets $D$ and $D'$ that differ by only one data point, denoted as $|D\Delta D'| = 1$, and any potential outcome $O \in \cR$, the algorithm satisfies 
\[
\probof{\cA(D) \in O} \le e^\eps \probof{\cA(D') \in O} + \delta.
\]
\end{definition}
Our goal is to find a $\theta_\text{prv}$ that is $(\epsilon, \delta)$-differentially private and minimizes the optimality gap w.r.t. the empirical risk, which is also referred to as the privacy error,
\[
\EE[L(\theta_\text{prv}, D)] - \min_{\theta} L(\theta, D),
\]
where the expectation is over the randomness of the private algorithm. 

\paragraph{Notations.} 
We use $\clip(x, C)$ to denote the clipping operation with norm $C$, defined by
\[
    \clip(x, C) \eqdef \min\left\{1, \frac{C}{ \| x\|_2} \right \} \cdot x.
\]
\new{For a vector $x$ of dimension $d$, we use $(x, 1)$ to refer to the $(d+1)$-dimensional vector obtained from augmenting $1$ to $x$.} We use $\diam(D) \eqdef \max_{x, x' \in D} \|x - x'\|_2$ to denote the diameter of features in the dataset $D$. Given a set of features $x_1, x_2,\ldots, x_n$ from a dataset $D$, let $U(D)$ denote the eigenbasis of $\sum_{i} x_i x^{\top}_i$. 
Let $M(D)$ be the projection operator to this eigenspace $U(D)$, given by $M(D) = U(D) (U(D))^{\top}$. Then, $M(D)$ defines a seminorm\footnote{
Given a vector space $V$ over real numbers, a seminorm on $V$ is a
nonnegative-valued function $Q$ such that for all $x \in \RR$ and $v \in V$, $Q(x \cdot v) = |x| \cdot Q(v)$ 
and for all $u, v \in V$, $Q(u+v) \leq Q(u)+Q(v)$.} given by
\[
\|v\|_{M(D)} = \| v M(D) v^{\top} \|_2.
\]

\section{Related work and our contribution}
\label{sec:related}
\begin{algorithm}[t]
\begin{algorithmic}[1]
\REQUIRE{Input:} Dataset $D = \{(x_1, y_1), (x_2, y_2),\ldots, (x_n, y_n)\}$ of $n$ points, privacy parameter $\eps, \delta$,  clipping norm $C$, step size $\eta$, number of iterations $T$, batch size $B \ge \max\{n \sqrt{\eps/4T}, 1\}$,
\STATE Set $\sigma^2 = \frac{8TC^2 \log(1/\delta)}{\ns^2\eps^2}$.
\STATE Choose an inital point $\theta_0$.
\FOR{$t = 0, 1, \ldots, T-1$}
\STATE Sample a batch $B_t$ of $B$ data points with replacement.
\STATE For all $i \in B_t$, compute the gradient $
    \nabla\ell(h(\theta, x_i), y_i).$
\STATE Compute a noisy clipped mean of the gradients by 
\begin{equation}\label{eqn:clipped_mean}
    \hat{g}_t = \frac1{|B_t|} \sum_{i \in B_t} \Paren{\text{Clip}(\nabla \ell(h(\theta_t, x_i) ,y_i)), C) + \cN(0, \sigma^2\mathbb{I}_d)}
\end{equation}
\STATE Update the parameter by
$
    w_{t+1} = w_t - \eta \hat{g}_t. 
$
\ENDFOR
\STATE \textbf{Return} $\bar{T} = \frac{1}{T} \sum_{t = 1}^T w_t$.
\end{algorithmic}
\caption{Differentially private SGD \citep{abadi2016deep}}
\label{alg:dpsgd}
\end{algorithm}

\noindent \textbf{Differentially private stochastic gradient descent (DPSGD).} We start by describing the mini-batch variant of the DPSGD algorithm in \cref{alg:dpsgd}.
The DPSGD algorithm is a modification of the popular SGD algorithm for training learning models. 
In each round, the individual sample gradients are clipped and noised to bound the per-round privacy loss (the influence of one data point). The overall privacy guarantee combines tools from privacy amplification via subsampling and strong composition (see \citet{abadi2016deep}). 

DPSGD has shown to achieve dimension-independent convergence result for optimizing generalized linear models. The upper bound below is implied by \citet[Theorem 4.1]{song2020characterizing}.
\begin{lemma}[\cite{song2020characterizing}]\label{lem:dpsgd_glm}
There exists an $(\eps, \delta)$-DP  instance of DPSGD, whose output satisfies,
\[
\EE [L(\theta_\mathrm{prv}, D)] -  L(\theta^*_D, D)
\leq 2 G \|\theta^*\|_{M} R \left(  \frac{\sqrt{\mathrm{rank}(M) \log (1/\delta)}}{n\eps} \right),
\]
where \newer{$G$ is the Lipschitz constant of the loss function,} $R = \max_i \|x_i\|_2$ and $M = M(D')$ with $D' = \{(x, 1) \mid x \in D \}$. 
\end{lemma}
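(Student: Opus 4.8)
The plan is to reproduce the dimension-independent analysis of noisy SGD for generalized linear models. The key structural fact is that for a linear model the gradient of $\ell(h(\theta,x),y)=\ell(w\cdot x+b,y)$ with respect to $\theta=(w,b)$ equals $\ell'(w\cdot x+b,y)\cdot(x,1)$, which always lies in the subspace $U\eqdef\vspan\{(x_i,1):i\in[n]\}$, i.e.\ the range of $M(D')$, whose dimension is $\rank(M)\le\min\{d+1,n\}$. Hence, initializing $\theta_0\in U$ (say $\theta_0=0$), the ``signal'' part of every update in \eqref{eqn:clipped_mean} stays in $U$, and the only way the isotropic noise $\cN(0,\sigma^2\II_d)$ affects the loss is through its projection onto $U$, which is a Gaussian supported on a $\rank(M)$-dimensional space; the component of the iterate orthogonal to $U$ performs a pure random walk that $L(\theta,D)$ never sees, since $L(\theta,D)$ depends on $\theta$ only through $\{w\cdot x_i+b\}_i$. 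So, up to post-processing, the algorithm is noisy minibatch SGD run inside $U$ with per-coordinate noise variance $\sigma^2/B$ in $\rank(M)$ effective coordinates, and we may take $\theta^*$ to be the minimizer of smallest seminorm, which lies in $U$, so that $\|\theta^*-\theta_0\|_2=\|\theta^*\|_{M}$.

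Next I would remove the clipping bias. Since $\ell(\cdot,y)$ is $G$-Lipschitz and $\|(x,1)\|_2\le\sqrt{R^2+1}=O(R)$, each per-sample gradient has norm $O(GR)$, so taking the clipping norm $C$ at least this large makes $\clip(\cdot,C)$ the identity; then $\hat g_t$ is an unbiased estimate of $\nabla L(\theta_t,D)$, and restricted to $U$ it satisfies $\EE\|\hat g_t\|_2^2\le O(G^2R^2)+\sigma^2\rank(M)/B$. Since $L(\cdot,D)$ is convex by assumption, the standard SGD convergence bound applied to the averaged iterate gives
\[
\EE[L(\bar\theta_T,D)]-L(\theta^*,D)\;\le\;\frac{\|\theta^*\|_M^2}{2\eta T}+\frac{\eta}{2}\Paren{O(G^2R^2)+\frac{\sigma^2\rank(M)}{B}}.
\]
Substituting $\sigma^2=8TC^2\log(1/\delta)/(n^2\eps^2)$ with $C=O(GR)$, using the batch-size condition $B\ge\max\{n\sqrt{\eps/4T},1\}$ to control the noise term, and optimizing over $\eta$, $T$ (and $B$), the right-hand side collapses to $O\Paren{G\|\theta^*\|_M R\sqrt{\rank(M)\log(1/\delta)}/(n\eps)}$, matching the claim up to the absolute constant.

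For privacy nothing new is required: this is exactly the DPSGD guarantee of \citet{abadi2016deep}, where each step releases, via the Gaussian mechanism, a noisy average of clipped per-sample gradients over a subsampled minibatch of size $B$; the condition $B\ge\max\{n\sqrt{\eps/4T},1\}$ places us in the regime where amplification by subsampling combined with the moments accountant certifies $(\eps,\delta)$-DP at the stated $\sigma^2$. The choice of $\theta_0$ and the restriction-to-$U$ observations are post-processing and do not affect the accounting.

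The step I expect to be the main obstacle is the first one: making rigorous that the $d$-dimensional isotropic noise may, for the purpose of upper bounding the loss, be replaced by its $\rank(M)$-dimensional projection. One must track that the orthogonal component of the iterates never enters $L$, justify replacing $\theta^*$ by its projection onto $U$ (so that $\|\theta^*-\theta_0\|_2=\|\theta^*\|_M$), and apply Jensen's inequality to the in-subspace average rather than to the full noisy iterate. Everything after that is the standard private-SGD computation, the only GLM-specific input being that $\dim U=\rank(M)$ plays the role of the ambient dimension $d$.
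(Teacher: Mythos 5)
The paper itself does not prove this lemma---it is imported directly from \citet[Theorem 4.1]{song2020characterizing}---and your sketch reconstructs essentially the argument of that source: GLM gradients always lie in the span of the augmented features $(x_i,1)$, so with $\theta_0=0$ the isotropic Gaussian noise influences the loss only through its $\rank(M)$-dimensional projection, the minimum-seminorm minimizer lies in that span with $\|\theta^*-\theta_0\|_2=\|\theta^*\|_M$, and the rest is the standard convex (sub)gradient analysis of noisy minibatch SGD combined with the usual subsampled-Gaussian accounting. Your proposal is therefore correct and takes the same route as the cited proof; what remains is only parameter bookkeeping (take $C\approx G\sqrt{R^2+1}$ so clipping is inactive, calibrate the noise so the averaged gradient has per-step standard deviation $\Theta\big(C\sqrt{T\log(1/\delta)}/(n\eps)\big)$, and choose $T\gtrsim n^2\eps^2/(\rank(M)\log(1/\delta))$ so the optimization term is dominated), which is exactly how the stated bound with its explicit constant arises.
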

\new{Note that in the analysis in \citet{song2020characterizing}, the norm of the gradients is bounded by $G\sqrt{R^2+1}$, and the clip norm is chosen such that the gradients are not clipped.} 
The above result is shown to be minimax optimal in terms of the dependence on the stated parameters. 
Recall that $\diam(D) \eqdef \max_{x, x' \in D} \|x - x'\|_2$ denotes the diameter of feautures in the dataset $D$. So a natural question is to ask is whether the dependence on $R$ can be improved to $\mathrm{diam}(D)$, which can be useful in cases when dataset if more favorable \eg when
$\mathrm{diam}(D) \ll R$. If yes, can the improvement be achieved by DPSGD?

We answer the first question by proposing an algorithm that combines feature preprocessing and DPSGD, and improves the dependence on $R$ to $\mathrm{diam}(D)$ in the leading term of the optimality gap, stated below. 
\begin{theorem}\label{thm:upper}
There exists an $(\eps, \delta)$-differentially private algorithm \text{DPSGD-F}, which is a combination of private feature preprocessing and DPSGD, such that when $n =  \Omega\left(\frac{\sqrt{d\log(1/\delta)\newer{\log R}}\log(d)}{\eps}\right)$ and $\eps = O(1)$, the output $\theta_{\prv}$  satisfies
\begin{align*}
& \mathbb{E}[L(\theta_\prv, D)]  - L(\theta^{*}, D) \\
& = O \left(  G \|\theta^{*}\|_{M} \newer{ \Paren{\diam(D) + \frac{R}{n^2}}}  \left(  \frac{\sqrt{\rank(M) \log (1/\delta) }+ \log n}{n\eps} \right) + \newer{ \frac{\phi(0) \log(n)}{n \eps}} \right),
\end{align*}
where $M = M(D')$ with $D' = \{(x, 1) \mid x \in D \}$. \newer{As discussed before \cref{lem:translate}, $\frac{R}{n^2}$ can be reduced to any inverse polynomial function of $\ns$ by increasing the requirement on $\ns$ by a constant factor.}
\end{theorem}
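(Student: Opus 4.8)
The plan is to realize \textsc{DPSGD-F} as a two-phase procedure: a private feature-preprocessing phase that recenters the features around an approximate mean, followed by the DPSGD instance of \cref{lem:dpsgd_glm} run on the recentered data. The geometric fact that drives everything is that recentering by the exact feature mean $\bar{x} = \frac1n\sum_i x_i$ makes \emph{every} feature small: $\|x_i - \bar{x}\|_2 = \|\frac1n\sum_j (x_i - x_j)\|_2 \le \frac1n\sum_j\|x_i-x_j\|_2 \le \diam(D)$. Hence if the preprocessing outputs $\hat\mu$ with $\|\hat\mu - \bar{x}\|_2 = O(\diam(D))$, the recentered dataset $\tilde D = \{(x_i - \hat\mu, y_i)\}_i$ has feature radius $O(\diam(D))$; and since $h((w,b), x-\hat\mu) = h((w, b + w\cdot\hat\mu), x)$, recentering is just a bijective reparametrization of the bias, so $L(\cdot, \tilde D)$ has the same minimum value as $L(\cdot, D)$. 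Applying \cref{lem:dpsgd_glm} to $\tilde D$ (augmented with the constant coordinate) then produces a leading term scaling with $\diam(D)$ instead of $R$.

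The substantive work is the preprocessing. A one-shot Gaussian-mechanism estimate of $\bar x$, with features clipped at their a priori bound $R$, only achieves error $O(R\sqrt{d\log(1/\delta)}/(n\eps))$, which need not be $o(\diam(D))$. Instead I would use an iterative refinement in the spirit of \textsc{CoinPress}-style private mean estimation: maintain a center $\hat\mu_k$ and radius $r_k$ (with $r_0 = R$), and at round $k$ clip each feature to the ball of radius $r_k$ around $\hat\mu_k$, release a noisy mean under an $(\eps/K,\delta/K)$ budget, set $\hat\mu_{k+1}$ to this mean, and shrink $r_{k+1} \approx r_k/2$, running $K = \Theta(\log n)$ rounds. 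Each round contracts the radius as long as the added noise is small compared to $r_k$; the hypothesis $n = \tilde\Omega(\sqrt{d\log(1/\delta)}\log(d)/\eps)$ is exactly what guarantees the contraction succeeds all the way down to scale $r_K = \Theta(\diam(D))$, and that the probability that any round fails is $O(1/\mathrm{poly}(n))$. On the good event we obtain $\hat\mu$ with $\|\hat\mu - \bar x\|_2 = O(\diam(D))$ together with a certified radius $\hat R = O(\diam(D))$ for the recentered features; on the low-probability bad event we fall back to the trivial estimate, where the recentered features still have norm $\le 2R$, the DPSGD iterates stay bounded so the loss is $O(G\|\theta^*\|_M R + \phi(0))$, and the failure probability together with the $K = \Theta(\log n)$ composition contributes exactly the additive $O((G\|\theta^*\|_M R + \phi(0))\log(n)/(n\eps))$ term.

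To assemble the bound: privacy follows by composing the $(\eps/2,\delta/2)$ preprocessing phase (itself a $K$-fold composition) with the $(\eps/2,\delta/2)$ DPSGD phase. For utility, condition on the good event and invoke \cref{lem:dpsgd_glm} on $\tilde D$ with feature radius $\hat R = O(\diam(D))$, giving $O\big(G\|\theta^*_{\tilde D}\|_{M(\tilde D')}\diam(D)\sqrt{\rank(M(\tilde D'))\log(1/\delta)}/(n\eps)\big)$. It then remains to translate the recentered quantities back to $M = M(D')$: augmenting with the constant coordinate makes $\vspan\{(x_i-\hat\mu,1)\}$ the same dimension as $\vspan\{(x_i,1)\}$, so $\rank(M(\tilde D')) = \rank(M)$, and one checks $\|\theta^*_{\tilde D}\|_{M(\tilde D')} = O(\|\theta^*\|_M)$ up to a discrepancy coming only from the reparametrized bias $b^* + w^*\cdot\hat\mu$, which \cref{ass:nontrivial} keeps controlled (the optimal score at an interior point is bounded) and which in any case is absorbed into the additive $O((G\|\theta^*\|_M R + \phi(0))\log(n)/(n\eps))$ term. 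Summing the good- and bad-event contributions yields the stated bound.

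I expect the crux to be the preprocessing analysis: showing that the iterative recentering actually contracts down to the $\diam(D)$ scale — rather than stalling at the $R\sqrt{d}/(n\eps)$ scale of any single-pass estimator — precisely under the stated sample-size condition, and carrying out the failure analysis cleanly enough that the leftover $R$-dependence lands only in the advertised lower-order term. The rank- and seminorm-bookkeeping under recentering is a secondary technical nuisance.
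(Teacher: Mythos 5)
Your skeleton matches the paper's: privately recenter the features at a mean accurate to $O(\diam(D))$ (the paper gets this off the shelf from \cref{lem:translate} rather than a hand-rolled CoinPress-style iteration, which is an inessential difference), fold the bias into an augmented constant coordinate, control the reparametrized bias via \cref{ass:nontrivial}, do the rank/seminorm bookkeeping, and run the GLM DPSGD bound of \cref{lem:dpsgd_glm} on the preprocessed data. However, two steps where your plan diverges from \cref{alg:new_algo} contain genuine gaps. First, your treatment of the scale certification and the bad event does not go through as stated. The algorithm cannot ``fall back to the trivial estimate'' on the bad event, because whether all recentered points lie in the certified ball is a non-private predicate of the data; and your claim that on the bad event ``the DPSGD iterates stay bounded so the loss is $O(G\|\theta^*\|_M R + \phi(0))$'' is unsupported: the losses here (hinge, logistic) are unbounded in the score and the DPSGD output is not constrained to a bounded set, so conditioning on a failure of the preprocessing leaves you with no a priori bound on $\EE[L(\theta_{\prv},D)\mid \text{bad}]$. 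The paper avoids this by privately estimating a quantile $\tau$ of $\|x_i-\hat\mu\|_2$ (\cref{lem:priv_quantile}) and clipping \emph{features} to radius $\sqrt{2}\tau$ unconditionally, so the DPSGD guarantee applies unconditionally to the clipped dataset; the at most $O(\log n/\eps)$ clipped points are then charged using the margin-loss structure (clipping only shrinks the score, so misclassified-side losses are bounded by $\phi(0)$) together with \cref{ass:nontrivial}, and this is exactly where the additive $(G\|\theta^*\|_M R+\phi(0))\log n/(n\eps)$ term comes from. To make your version rigorous you would essentially have to reintroduce this clipping-plus-margin argument.

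Second, the scale of the appended constant coordinate matters and cannot be glossed over. The paper appends $\tau$ (a $\diam$-scale quantity), not $1$: with constant coordinate $\tau$, \cref{ass:nontrivial} gives $|b^{*\prime}| \le \|w^*\|_2\max_i\|x_i-\hat\mu\|_2/\tau = O(\|w^*\|_2)$, so the product $\|\theta'^*\|\cdot(\text{feature radius})$ appearing in \cref{lem:dpsgd_glm} is $O(\|\theta^*\|\,\diam(D))$. If instead the constant coordinate is $1$ (or any scale not tied to $\diam(D)$), the feature radius becomes $\sqrt{\diam(D)^2+1}$ and the reparametrized bias contributes a term of order $G\|\theta^*\|\sqrt{\rank(M)\log(1/\delta)}/(n\eps)$ with no $\diam(D)$ factor; contrary to your claim, this is \emph{not} absorbed into the additive $O((G\|\theta^*\|_M R+\phi(0))\log n/(n\eps))$ term (it carries the $\sqrt{\rank(M)\log(1/\delta)}$ factor and survives even when $\diam(D)\to 0$). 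So you must specify the augmentation scale as (an estimate of) the recentered feature radius and track it through the norm bound, as the paper does via $\tau$ in \cref{lem:rank}.
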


We will describe the algorithm and disucss the proof in \cref{sec:algorithm}. The next theorem states that the first term in the above result is tight.
\begin{theorem}
\label{thm:lower}
Let $\cA$ be any $(\eps, \delta)$-DP optimization algorithm with $\eps \le c$ and $\delta \le c\eps/\ns$ for some constant $c > 0$. There exists a dataset $D = \{(x_i, y_i), i \in[\ns] \}$ and a loss function $\ell(\theta \cdot x, y)$ that is convex and $G$-Lipschitz loss functions for all $y$, which is of form \cref{eqn:loss_typical},  
and 
\[
    \EE \left[L(\cA(D), D)\right] - L(\theta^*_D, D) = \Omega \Paren{G \cdot {\rm diam}(D) \cdot  \min\left\{1, \frac{\|\theta^*\|_M \sqrt{{\rm rank}(M)}}{n \eps}\right\}},
\]
where $M = M(D')$ with $D' = \{(x, 1) \mid x \in D \}$.
\end{theorem}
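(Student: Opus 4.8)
The plan is to prove \cref{thm:lower} by reducing differentially private optimization on a carefully designed family of classification instances to a differentially private \emph{reconstruction} problem, and then invoking a fingerprinting-type lower bound. Fix $k \le d$ and place all features in a $k$-dimensional subspace. For a hidden sign vector $v \in \{-1,1\}^k$, let $D_v$ consist of $n$ points whose label-weighted features average to $\gamma\, v$ for a tunable signal strength $\gamma$, and rescale the features so that $\diam(D_v)$ equals a prescribed value; take the loss to be a margin loss of the form \cref{eqn:loss_typical} (the hinge or logistic loss works), so $G$ appears only as an overall multiplicative constant. A direct computation then shows $\rank(M(D_v')) \in \{k, k+1\}$, that the empirical minimizer $\theta^*$ is proportional to $v$ with $\|\theta^*\|_M$ an explicit function of $\gamma$, $k$ and the feature scale, and that \cref{ass:nontrivial} holds by construction; the parameters $k$, $\gamma$ and the scale therefore realize $\rank(M)$, $\|\theta^*\|_M$ and $\diam(D)$ in the statement. (By additionally translating all features by a fixed large vector in the orthogonal complement one can also make $R$ arbitrarily large without changing $\diam(D)$, $\rank(M)$, $\|\theta^*\|_M$ or any loss value, which is what shows the $\diam(D)$-dependence of \cref{thm:upper} is genuine and cannot be improved to $R$.)

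Two ingredients then finish the argument. First, a curvature/margin estimate: because the loss is a margin loss and the instance is built so that $\theta^*$ separates a pair of points with margin on the order of $\|\theta^*\|_M \diam(D)$, the excess risk of any $\hat\theta$ is bounded below by a constant times $G\,\diam(D)$ times an appropriately normalized measure of how badly $\hat\theta$ reconstructs the hidden parameter $v$ (for the hinge loss, essentially the fraction of coordinates on which $\mathrm{sign}(\hat\theta)$ disagrees with $v$, weighted by $\|\theta^*\|_M$). Second, a differential privacy lower bound: post-processing $\cA(D_v)$ into a sign estimate of $v$ yields an $(\eps,\delta)$-DP estimator, and the fingerprinting lower bound for one-way marginals --- which applies in the regime $\delta \le c\eps/n$ and carries no $\log(1/\delta)$ factor --- shows that no such estimator can drive this reconstruction measure below $\Omega(\min\{1,\, \|\theta^*\|_M\sqrt{\rank M}/(n\eps)\})$ once $\gamma$ is chosen appropriately. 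Combining the two yields $\EE[L(\cA(D),D)] - L(\theta^*, D) = \Omega\big(G\,\diam(D)\,\min\{1,\, \|\theta^*\|_M \sqrt{\rank M}/(n\eps)\}\big)$; the branch $\min = 1$ corresponds to the regime in which no private algorithm beats the trivial output, and the other branch is the privacy-limited regime, which matches the leading term of \cref{thm:upper}.

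The step I expect to be the main obstacle is the quantitative calibration: the feature scale, the signal strength $\gamma$ and the choice of margin loss all have to be set so that $\diam(D)$, $\|\theta^*\|_M$, $\rank(M)$ and the per-coordinate excess-risk penalty combine to yield exactly the factor $\diam(D)\min\{1, \|\theta^*\|_M\sqrt{\rank M}/(n\eps)\}$ and not some other combination of powers of $k$ and $n\eps$; in particular one needs a loss for which $\|\theta^*\|_M$ is genuinely controlled by $\gamma$ and for which the reconstruction-error-to-excess-risk conversion has the correct exponent, and one must check the fingerprinting bound is tight in the relevant $\ell_2$ (rather than $\ell_\infty$) sense. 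The remaining points --- that passing from $M(D)$ to the bias-augmented $M(D')$ perturbs $\rank(M)$ and $\|\theta^*\|_M$ only by constants, that the constructed $\ell$ is convex and $G$-Lipschitz of the form \cref{eqn:loss_typical}, and that the $(\eps,\delta)$-to-``pure-DP-like'' passage is valid under $\delta \le c\eps/n$ --- are routine but need to be verified.
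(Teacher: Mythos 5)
Your overall strategy --- encode a hidden sign vector in the dataset, convert excess classification risk into a reconstruction guarantee, and invoke a fingerprinting lower bound in the regime $\delta \le c\eps/n$ --- is the same mathematics that underlies the result the paper cites, but the paper's actual proof is a short reduction rather than a from-scratch argument: it takes the hard instance of \citet[Theorem 3.3]{pmlr-v130-song21a} for the absolute loss $\ell(\theta\cdot x,y)=|\theta\cdot x-y|$ (features are standard basis vectors or zero), relabels $y_i'=2y_i-1$, and replaces the loss by $\ell'(\theta\cdot x,y)=\max\{1-y(2\theta\cdot x-1),0\}$, which is of the margin form \cref{eqn:loss_typical} and coincides with $2|\theta\cdot x-y|$ for $\theta\in[0,1]^d$; a projection argument then transfers the $\Omega(\min\{1,d_0/(n\eps)\})$ bound, and the parameters are read off as $\diam(D)\le\sqrt{2}$, $\rank(M)\le d_0$, $\|\theta^*\|_M\le\sqrt{d_0}$, with general $G$ and diameter recovered by rescaling.

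The genuine gap in your sketch is precisely the step you flag as the main obstacle. For your construction (``label-weighted features average to $\gamma v$'') with hinge or logistic loss, the claims that the empirical minimizer $\theta^*$ is proportional to $v$ with controlled $\|\theta^*\|_M$, and that excess risk lower-bounds a sign-agreement reconstruction error, do not hold as stated: with logistic loss on (near-)separable data the unregularized minimizer diverges, and with hinge loss the minimizer is non-unique and the loss is exactly flat once a point is classified with margin at least one, so an output $\hat\theta$ can disagree with $v$ in norm or even in several coordinates while incurring no excess risk. Sign disagreement converts to excess risk only when the instance has a per-coordinate structure (each coordinate probed by its own basis-vector examples), which is exactly the structure of the Song et al.\ instance and is where all the quantitative calibration you defer actually lives; you would also need the fingerprinting bound in the form that controls the summed per-coordinate error (an $\ell_1/\ell_2$-type statement, not $\ell_\infty$) under $(\eps,\delta)$-DP with $\delta\le c\eps/n$, which is again what the cited theorem packages. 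So your route can be made to work, but only by reconstructing the content of \citet[Theorem 3.3]{pmlr-v130-song21a}; as written, the central conversion lemma is unproved and, for the specific construction proposed, false. Your side remark that one can translate the features to make $R\gg\diam(D)$ without changing $\diam(D)$, $\rank(M)$, $\|\theta^*\|_M$ or any loss value is correct and is a nice way to emphasize that the $\diam(D)$ dependence cannot be replaced by $R$, though it is not needed for \cref{thm:lower} itself.
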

\new{Moreover, we show that DPSGD must incur an error proportional to $R$ by providing a counter-example in \cref{sec:counter}.}
\section{A counter-example for DPSGD}
\label{sec:counter}

\ts{Did not understand the comment by reviewer: Are the conditions for Theorem 1 and the counter example the same? It appears that n is bounded by mu or epsilon in the counter-example while n and d are sufficiently large in Theorem 1. } \zs{I think the reviewer is asking whether Theorem 1 will lead to a nontrivial error for this example. It seems the second $R/\ns\eps$ term will harm us.}
We consider the simple case of binary classification with hinge loss. Let $\mu > 0$. Let $e_1$ and $e_2$ denote two orthogonal basis vectors. 
Suppose dataset $D'$ is partitioned 
into two equal parts  $D'_1$ and $D'_{-1}$, each with size $n/2$. $D'_1$ contains $n/2$ samples with $x = \mu e_1 + e_2$ and $y=1$ and $D'_{-1}$ contains $n/2$ samples with $x = \mu e_1 - e_2$ and $y=-1$. A visualization of the dataset is provided in \cref{fig:feature}. Similarly, let the dataset $D''$ is partitioned 
into two equal parts  $D''_1$ and $D''_{-1}$, each with size $n/2$. $D''_1$ contains $n/2$ samples with $x = \mu e_1 + e_2$ and $y=-1$ and $D''_{-1}$ contains $n/2$ samples with $x = \mu e_1 - e_2$ and $y=1$. We assume there is no bias term for simplicity, \ie $b = 0$\footnote{The bias term can be added by assuming there is an additional entry with a $1$ in the feature vector. The proof will go through similarly as the optimal parameter has $b = 0$.}. For any $w = (w(1), w(2))$, let
\[
    \ell(w \cdot x, y) = \max\{1 - y (w \cdot x), 0\}.
\]

Further, the average empirical loss on $D'$ will be
\[
    L(w, D') = \frac12 \max\{1 - (\mu w(1) + w(2)), 0\} + \frac12 \max\{1 + (\mu w(1) - w(2)), 0\}.
\]
Observe that $w^{*'} = (0, 1)$ has zero empirical loss for for dataset $D'$ and $w^{*''} = (0, -1)$  has zero empirical loss for for dataset $D''$ . The next theorem states that DPSGD with any clipping norm, batch size, number of steps, learning rate and any initialization will not obtain nontrivial error on both $D'$ and $D''$ when $\mu > \ns \eps$. 
\begin{theorem} \label{thm:dpsgd_fails}
    Let $\cA$ be a $(\eps, \delta)$-private DPSGD algorithm with any number of steps $T$, learning rate $\eta$, clipping norm $C$, and (possibly randomized) initialization $w_0$. We have when $  \ns < \mu/\eps$,
    \[
   \max_{D \in \{D', D''\}} \left( \EE\left[L(\cA(D), D)\right] - \min_w L(w, D) \right) = \Omega\Paren{1}.
    \]
\end{theorem}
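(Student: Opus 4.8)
The plan is to pass to two dimensions, follow only the $e_2$-coordinate of the iterate (the sole direction on which the loss depends), and show that once $\ns<\mu/\eps$ the privacy-mandated noise in that direction overwhelms every bit of signal DPSGD can accumulate there, so it cannot distinguish the optimum $(0,1)$ of $D'$ from the optimum $(0,-1)$ of $D''$ and must err by $\Omega(1)$ on at least one of them.

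First I would reduce to $d=2$: every clipped per-sample gradient on $D'$ or $D''$ is a scalar multiple of $\mu e_1\pm e_2$, so the loss and the deterministic part of the update depend only on $u_t=\langle w_t,e_1\rangle$ and $v_t=\langle w_t,e_2\rangle$, while the isotropic Gaussian noise in the remaining coordinates is an irrelevant random walk; so write $w_t=(u_t,v_t)$. I will use three facts. (i) For the hinge margin loss, $L(w,D')\ge\tfrac12(1-v)^+$ and $L(w,D'')\ge\tfrac12(1+v)^+$ for all $w=(u,v)$ (keep whichever of $\pm\mu u$ is nonnegative); since $\min_wL(w,D')=\min_wL(w,D'')=0$, it suffices to lower-bound $\tfrac12\big(\EE[L(\cA(D'),D')]+\EE[L(\cA(D''),D'')]\big)$ by a universal constant. (ii) On $D'$ the $e_2$-component of every clipped gradient lies in $\{0,-\beta\}$ and on $D''$ in $\{0,+\beta\}$, where $\beta:=\min\{1,C/\sqrt{\mu^2+1}\}$ satisfies $\mu\beta\le C$. (iii) $(\eps,\delta)$-DP of DPSGD forces the isotropic noise added to the averaged gradient to have per-coordinate standard deviation $\bar\sigma=\Omega\big(C\sqrt{T\log(1/\delta)}/(\ns\eps)\big)$, independent of the batch size (the standard noise lower bound). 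Combining these, for any fixed initialization $v_0$ the Polyak average returned on $D'$ is $\bar v^{D'}=v_0+S'+G'$, where $S':=-\eta\sum_{\tau<T}\tfrac{T-\tau}{T}s_\tau\in[0,\tfrac12\eta\beta(T+1)]$ collects the sign-definite, bounded clipped-gradient contribution and $G':=-\eta\sum_{\tau<T}\tfrac{T-\tau}{T}\zeta_\tau\sim\cN(0,\tau_G^2)$ collects the injected noise, with $\tau_G^2\ge\tfrac13\eta^2\bar\sigma^2T$; symmetrically $\bar v^{D''}=v_0+S''+G''$ with $S''\in[-\tfrac12\eta\beta(T+1),0]$. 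Writing $\Delta:=\eta\beta(T+1)$ for the signal budget, facts (ii)–(iii) and $\ns<\mu/\eps$ give $\tau_G\ge c_0\Delta$ for a universal $c_0>1$ (indeed $\tau_G=\Omega\big((\mu\sqrt{\log(1/\delta)}/\ns\eps)\,\Delta\big)\gg\Delta$ for small $\delta$).

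Then I case on $\Delta$. If $\Delta\le1$: using $(x)^+\ge x$ and the bounds on $S',S''$, and noting the $v_0$'s cancel, $\EE[L(\cA(D'),D')\mid v_0]+\EE[L(\cA(D''),D'')\mid v_0]\ge\tfrac12(1-\EE\bar v^{D'})+\tfrac12(1+\EE\bar v^{D''})=1+\tfrac12(\EE S''-\EE S')\ge1-\Delta/2\ge\tfrac12$. If $\Delta>1$ and $|v_0|>\tfrac12\tau_G$, say $v_0>\tfrac12\tau_G$, then $\EE[\bar v^{D''}\mid v_0]\ge v_0-\Delta/2>(\tau_G-\Delta)/2>0$, so $\EE[L(\cA(D''),D'')\mid v_0]\ge\tfrac12(1+\EE\bar v^{D''})>\tfrac12$ (symmetrically for $v_0<-\tfrac12\tau_G$). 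If $\Delta>1$ and $|v_0|\le\tfrac12\tau_G$: condition on the noise components orthogonal to $a:=(\tfrac{T-\tau}{T})_{\tau<T}$; this leaves one Gaussian $g\sim\cN(0,\bar\sigma^2)$ along $a$ (independent of the conditioning), with $G'=-\eta\|a\|g$ and $S'$ still in $[0,\Delta/2]$, so $\bar v^{D'}\le v_0+\Delta/2-\eta\|a\|g$ and therefore $\Pr[\bar v^{D'}\le0\mid v_0]\ge\Pr\big[g\ge(v_0+\Delta/2)/\eta\|a\|\big]=\Pr\big[\cN(0,1)\ge(v_0+\Delta/2)/\tau_G\big]\ge\Pr[\cN(0,1)\ge1]>\tfrac18$, whence $\EE[L(\cA(D'),D')\mid v_0]\ge\tfrac12\EE[(1-\bar v^{D'})^+\mid v_0]>\tfrac1{16}$. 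In every case the conditional sum of the two losses is at least a universal constant; averaging over $v_0$ drawn from the initialization law and halving proves the theorem.

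The step I expect to be delicate is exactly "$\bar v$ retains fluctuations of order $\tau_G$": $S'$ and $G'$ are dependent, since the clipped-gradient mass at step $\tau$ depends on the noise injected before step $\tau$, so one cannot simply treat $\bar v^{D'}$ as a shifted Gaussian. The conditioning-on-$a^{\perp}$ device above circumvents this because $S'$ is uniformly bounded for every value of $g$ whereas $G'$ is linear in $g$ with slope $\eta\|a\|$ comparable to $\tau_G/\bar\sigma$. Secondary points to nail down are that the genuinely degenerate regimes (very small $\eta$ or $T$) indeed fall under $\Delta\le1$, and the precise constant in (iii) — hence how small $\delta$ must be for $c_0$ to comfortably exceed $1$ (for larger $\delta$ the argument still goes through with a worse constant in the $\Omega(1)$).
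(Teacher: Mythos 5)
Your proposal is correct and follows essentially the same route as the paper's proof: lower-bound the loss by a hinge in the $e_2$-coordinate, observe that clipping caps the per-step signal in that coordinate at $\eta\min\{1,C/\sqrt{\mu^2+1}\}$ while the privacy-calibrated per-step noise has scale $\eta\,\Theta\bigl(C\sqrt{T\log(1/\delta)}/(n\eps)\bigr)$, and conclude via a pointwise domination plus Gaussian tail bound that the averaged iterate lands on the wrong side of zero with constant probability when $n<\mu/\eps$. Your case analysis on the signal budget $\Delta$ and on $|v_0|$ merely replaces the paper's simpler device of selecting $D'$ or $D''$ according to the sign of $w_0(2)$; the substance is the same.
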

\begin{figure}
\centering
\parbox{6cm}{
\includegraphics[width=6cm]{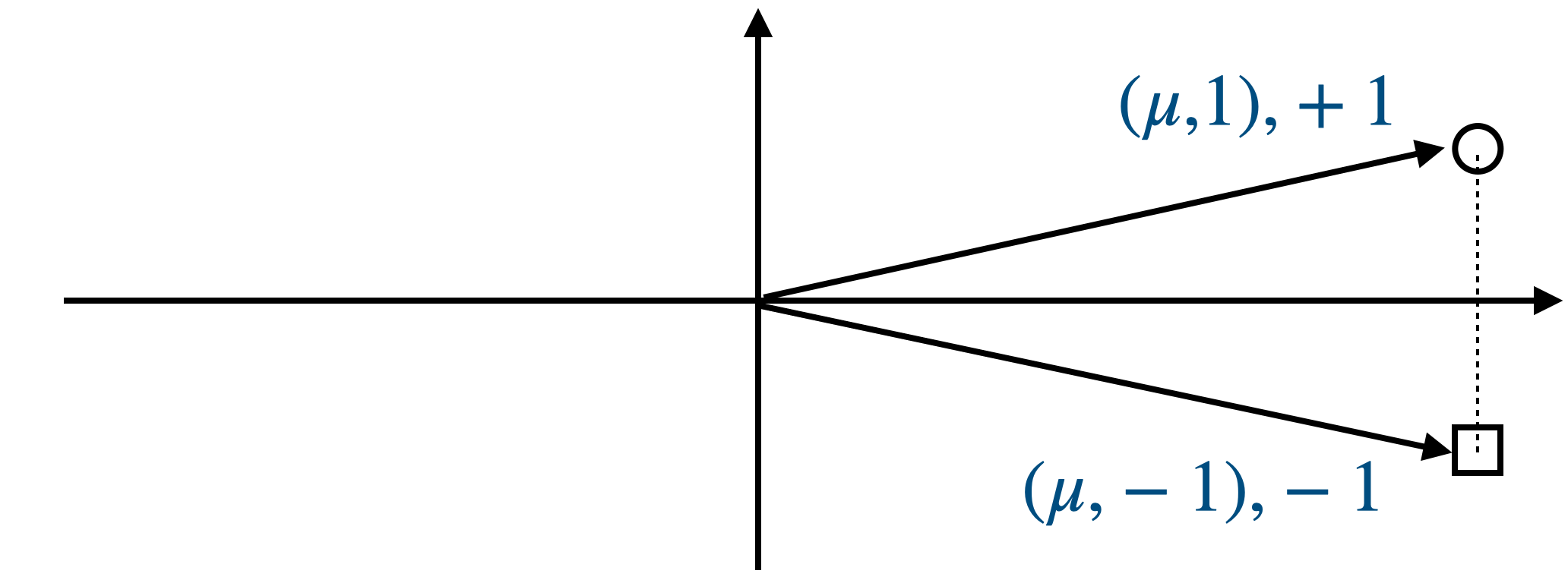}
\caption{Feature vectors.}
\label{fig:feature}}
\qquad
\begin{minipage}{6cm}
\includegraphics[width=6cm]{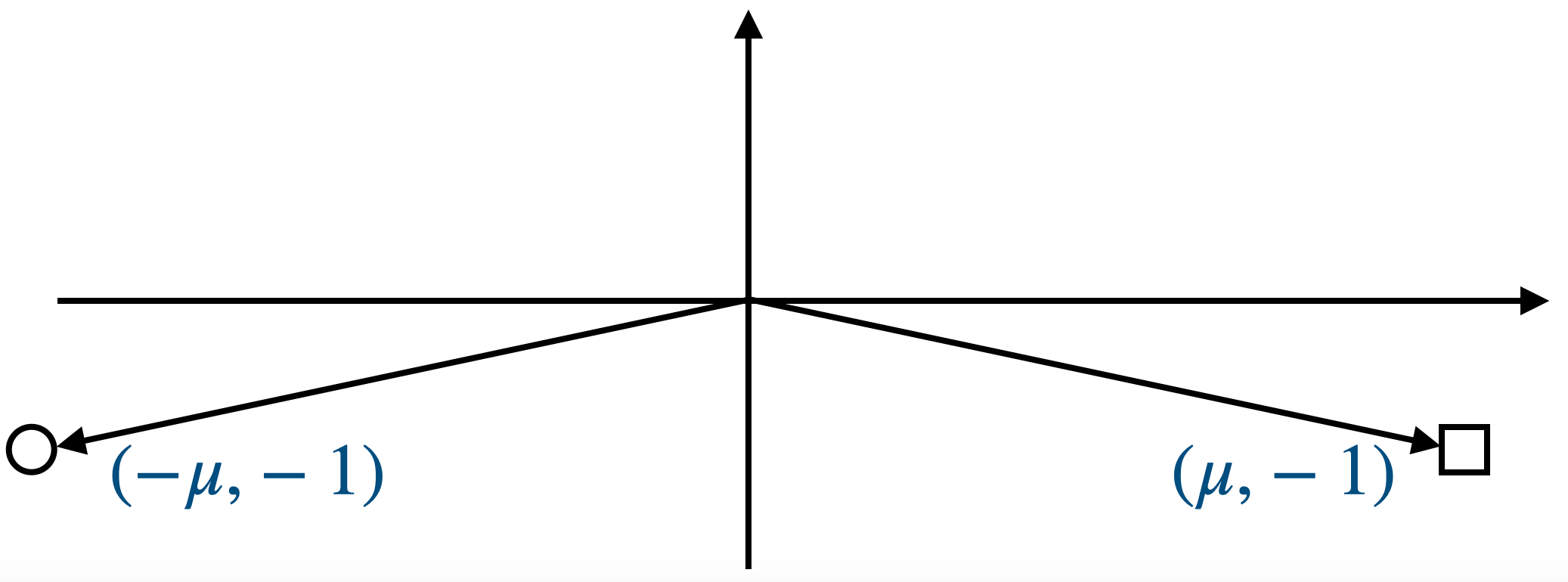}
\caption{Gradient vectors.}
\label{fig:gradient}
\end{minipage}
\vspace{-10pt}
\end{figure}

The crux of the proof involves showing that if the initialization is ``uninformative'' for a dataset, then DPSGD does not obtain nontrivial error when $\mu > \ns \eps$. Here by ``uninformative'', we mean $w_0 = (w_0(1), w_0(2))$ satisfy that  $ w_0(2) \le 0$ for dataset $D'$ and $w_0(2) \geq 0$ for dataset $D''$. 
\new{Note that at least one of the above two conditions must hold.} Below we give a high-level idea of the proof. Since $D'_1$ and $D'_{-1}$ (and similarly $D''_1$ and $D''_{-1}$) have the same $x(1)$, the performance of the final optimizer will depend mostly on the second parameter $w(2)$. The following lemma can be proved. 
\begin{lemma} \label{lem:w2} For dataset $D'$, let $w$ be any iterate of the DPSGD algorithm, then 
\[
    \EE\left[L(w, D')\right] - \min_w L(w, D') \ge  \probof{w(2) \le 0}.
\]
Similarly, for dataset $D''$,
\[
    \EE\left[L(w, D'')\right] - \min_w L(w, D'') \ge  \probof{w(2) \geq 0}.
\]
\end{lemma}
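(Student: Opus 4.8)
The plan is to observe that this lemma is not really about DPSGD at all: it is a deterministic fact about the loss landscape of this particular instance, namely that $L(w,D')\ge 1$ for \emph{every} weight vector $w=(w(1),w(2))$ with $w(2)\le 0$, while $\min_w L(w,D')=0$. Once this pointwise bound is in hand, the stated inequality follows from non-negativity of the loss and conditioning on the sign of $w(2)$.

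First I would record the exact form of the objective, already derived above,
\[
L(w, D') = \tfrac12 \max\{1 - (\mu w(1) + w(2)), 0\} + \tfrac12 \max\{1 + (\mu w(1) - w(2)), 0\},
\]
and recall that $w^{*\prime}=(0,1)$ gives $L(w^{*\prime},D')=0$, so $\min_w L(w,D')=0$. The key step is to lower bound $L(w,D')$ on the event $\{w(2)\le 0\}$. This is exactly where the geometry of the example enters: the groups $D'_1$ and $D'_{-1}$ share the same first coordinate $\mu$, so the contribution of $w(1)$ to the two hinge terms appears as $-\mu w(1)$ and $+\mu w(1)$ and cancels once we drop the truncation at $0$. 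Concretely, using $\max\{z,0\}\ge z$ termwise,
\[
\max\{1 - (\mu w(1) + w(2)), 0\} + \max\{1 + (\mu w(1) - w(2)), 0\} \ \ge\ (1 - \mu w(1) - w(2)) + (1 + \mu w(1) - w(2)) \ =\ 2 - 2w(2)\ \ge\ 2,
\]
since $w(2)\le 0$; hence $L(w,D')\ge 1$ whenever $w(2)\le 0$.

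Finally I would assemble the pieces. Since $L(\cdot,D')\ge 0$ pointwise and $L(w,D')\ge 1$ on $\{w(2)\le 0\}$, the law of total expectation gives
\[
\EE[L(w,D')] \ \ge\ \EE\big[L(w,D')\,\indic{w(2)\le 0}\big] \ \ge\ \probof{w(2)\le 0},
\]
and subtracting $\min_w L(w,D')=0$ yields the first claim. For $D''$ the argument is identical after swapping the two groups: there $L(w,D'')=\tfrac12\max\{1+\mu w(1)+w(2),0\}+\tfrac12\max\{1-\mu w(1)+w(2),0\}$, its minimum is $0$ (attained at $(0,-1)$), and the same termwise inequality shows $L(w,D'')\ge 1$ whenever $w(2)\ge 0$, whence $\EE[L(w,D'')]-\min_w L(w,D'')\ge\probof{w(2)\ge 0}$.

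There is essentially no hard step; the only thing to get right is the cancellation of the $w(1)$-terms, i.e. recognizing that whatever the DPSGD iterates do in the first coordinate is irrelevant and that the sign of $w(2)$ alone controls whether the loss stays bounded away from its optimum. This is precisely the reduction that allows the proof of \cref{thm:dpsgd_fails} to focus solely on controlling $\probof{w(2)\le 0}$ (resp. $\probof{w(2)\ge 0}$) along the trajectory.
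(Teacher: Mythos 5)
Your proof is correct and follows essentially the same route as the paper: both arguments hinge on the cancellation of the $\mu w(1)$ terms to obtain a pointwise lower bound on $L(w,D')$ depending only on $w(2)$, and then convert it into the probability bound $\probof{w(2)\le 0}$ using nonnegativity of the loss and $\min_w L(w,D')=0$. The only cosmetic difference is that the paper gets the bound $L(w,D')\ge\max\{1-w(2),0\}$ via convexity of the hinge, while you use the termwise inequality $\max\{z,0\}\ge z$ restricted to the event $\{w(2)\le 0\}$.
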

We will focus on the case when $w_0(2) \le 0$ and consider dataset $D'$ and loss $L(w, D')$\footnote{When $w_0(2) \ge 0$, we will consider $D''$ and loss $L(w, D'')$ instead.}. It would be enough to show that for every iteration $t$, the parameter $w_t$ has a constant probability of being negative. The gradient of each individual loss functions satisfies that for $(x, y) \in D'$, we have
\[
    \nabla_w \ell(w \cdot x, y) = \begin{cases}
    \indic{\mu w(1) + w(2) < 1} \cdot (-\mu, -1) & \text{ if } (x, y) \in D'_1,\\
     \indic{ - \mu w(1) + w(2) < 1} \cdot (\mu, -1) & \text{ if } (x, y) \in D'_{-1}.
    \end{cases}
\]
Hence the norm of the gradient vectors can be as large as $\sqrt{\mu^2 + 1}$. By clipping the gradient to norm $C$, the `signal' component of the gradient on the second coordinate will be decreased to $\min\{1, C/\sqrt{\mu^2 + 1}\}$. However, in each iteration, DPSGD will add a Gaussian noise with standard deviation proportional to $C/n\eps$, which can be large compared to $\min\{1, C/\sqrt{\mu^2 + 1}\}$ when $\mu$ is large. When $\ns < \mu/\eps$, the total noise will dominate the signal, making the probability of $w_t(2) < 0$ nontrivial. We provide the complete proof in Appendix~\ref{app:toy}.  
  \paragraph{DPSGD with better mean estimation algorithm.} One attempt to solve the above issue is to replace \cref{eqn:clipped_mean} by recently proposed private mean estimation algorithms, which can improve the estimation error to scale with the diameter of the gradients (\eg in \citet{karwa2017finite, huang2021instance}). However, it is unclear whether this will work. As shown in \cref{fig:gradient}, the diameter of the gradients can be proportional to the maximum norm of feature vectors instead of the diameter.
  
In the next section, we take advantage of the fact that the data points are close in the feature space, and design an algorithm that first privately preprocesses the features and then perform DPSGD to achieve better bounds. \newer{In particular, the bound in \cref{thm:upper} states that when $\max\{\sqrt{\log(\mu)\log(1/\delta)}/\eps, \sqrt[3]{\mu/\eps}\} \ll \ns < \mu/\eps$, the new algorithm achieves a $o(1)$ error, which is strictly better than DPSGD.}
\vspace{-10pt}
\section{A feature preprocessing augmented DPSGD algorithm}
\label{sec:algorithm}
We propose a new algorithm called \textsc{DPSGD-F} (Algorithm~\ref{alg:new_algo}) that combines feature preprocessing and DPSGD. We show that the new algorithm can achieve the performance described in \cref{thm:upper}. We overview each step of the algorithm below and provide theoretical guarantees for each step. The detailed proofs are listed in \cref{app:upper_proof}.

\begin{algorithm}[t]
\begin{algorithmic}[1]
\REQUIRE{Input: Dataset $D = \{(x_1, y_1), (x_2, y_2),\ldots, (x_n, y_n)\}$ of $n$ points, $\eps, \delta$. Private mean estimation algorithm ${\rm PrivMean}$ (\cref{lem:translate}). Private quantile esimation algorithm ${\rm PrivQuantile}$ (\cref{lem:priv_quantile}), DPSGD for GLM (\cref{lem:dpsgd_glm}).
}
\STATE \textbf{Private mean estimation.} Compute $\hat{\mu}$, the differentailly private mean of features in $D$ with Lemma~\ref{lem:translate} and privacy budget $(\eps/3, \delta/3)$.
\STATE \textbf{Private quantile estimation.} Let $S = \{\|x - \hat{\mu}\|_2, x \in D\}$ be the set of distances from the feature vectors to the computed mean. Find a private quantile $\tau$ of set $S$ using Lemma~\ref{lem:priv_quantile} with privacy budget $(\eps/3, \delta/3)$.

\STATE \textbf{Translate and augment the dataset with bias.}
Preprocess the dataset by translation and appending a constant feature $\tau$. Let $D' = \{(x'_1, y_1), \ldots, (x'_n, y_n)\}$ where \[x'_i = (x_i-\hat{\mu}, \tau).\]
\vspace{-10pt}
\STATE \textbf{Feature clipping.} Let $D'' = \{(x''_1, y_1), (x''_2, y_2),\ldots, (x''_n, y_n)\}$, where 
\[x''_i = {\rm Clip}\Paren{x_i', \sqrt{2}\tau}.\] 
\vspace{-10pt}
\STATE \textbf{DPSGD with preprocessed features.} Compute an approximate minimizer $\theta_\prv'' \in \RR^{d+1}$ of $L(\theta, D'')$ using the 
DPSGD algorithm for GLM in \cref{lem:dpsgd_glm} with  $(\eps/3, \delta/3)$. 
\STATE \textbf{Return} $\theta_{\prv} = (w_\prv, b_\prv)$, where  $w_\prv = \theta''_{\prv}[1:d]$ and $b_\prv =  \theta''_{\prv}[d+1] \tau - \hat{\mu}$.
\end{algorithmic}
\caption{DPSGD with feature preprocessing (\textsc{DPSGD-F}).}
\label{alg:new_algo}
\end{algorithm}

\medskip
\noindent\textbf{Private mean estimation of features.} We first compute a differentially private mean of features $\hat{\mu}$ using the instance-optimal mean estimation algorithm from \citet{huang2021instance}, who showed that the mean can be estimated efficiently with error proportional to the diameter of the dataset. The following result on private mean estimation follows by combining Theorem 3.3 and Remark 3.2 from \citet{huang2021instance}. 
\newer{
Note that the result of \citet{huang2021instance} is a high-probability bound, which does not explicitly depend on $R$ (or $r(D)$ in their notation).
To obtain an in-expectation bound, we choose a failure probability of $1/n^2$ in the proof and this results in the $R/n^2$ term in Lemma 3. This term can be reduced to any inverse polynomial function of $\ns$ by increasing the requirement on $\ns$ in the assumption by a constant factor. We choose $1/\ns^2$ here mainly for presentation purpose. See Appendix~\ref{app:translate} for the detailed analysis.}
\begin{lemma}[Appendix~\ref{app:translate}]
\label{lem:translate}
Let $\eps \leq \log(3/\delta)$ and $n \geq c \cdot \left(\frac{\sqrt{d\log(1/\delta)\log R}\log(d)}{\eps}\right)$ for a sufficiently large constant $c$, then there exists an $(\eps/3,\delta/3)$-DP algorithm whose output $\hat{\mu} \in \cB^d_2(R)$ satisfies 
\begin{align*}
\EE[\|\mu - \hat{\mu}\|_2]
& \leq \diam(D) + \frac{2R}{n^2}.
\end{align*}
\end{lemma}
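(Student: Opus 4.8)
The plan is a near--black-box invocation of the instance-optimal mean estimation algorithm of \citet{huang2021instance}, broken into three parts: (i) run their mechanism (Theorem~3.3) on the features $x_1,\dots,x_n\in\cB^d_2(R)$, whose non-private mean is $\mu=\frac1n\sum_{i\in[n]}x_i$, with privacy budget $(\eps/3,\delta/3)$; (ii) argue that the instance-dependent ``scale'' quantity in their error bound is dominated by the crude quantity $\diam(D)$; and (iii) project the output onto $\cB^d_2(R)$ so that the requirement $\hat\mu\in\cB^d_2(R)$ holds. The a priori norm bound $R$ that their algorithm needs is exactly the radius of $\cX$, and privacy is then immediate: a single call at budget $(\eps/3,\delta/3)$ followed by a post-processing projection.

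For (i)--(ii): their algorithm first computes a coarse estimate of $\mu$ --- accurate to within a constant multiple of the dataset's local scale --- then recenters, clips to (a constant blow-up of) that scale, and applies the Gaussian mechanism. Reading off Theorem~3.3, the expected $\ell_2$ error has the shape
\[
\EE[\|\mu-\hat\mu\|_2]\;\le\;\tilde O\!\left(\frac{\sqrt{d\log(1/\delta)}}{n\eps}\right)\cdot s(D)\;+\;R\cdot(\text{a factor decaying polynomially in }n),
\]
where $s(D)$ is their instance scale (a diameter after discarding an $O(\log(1/\delta)/(n\eps))$-fraction of the points) and the hypothesis $\eps\le\log(3/\delta)$ is what validates the noise calibration. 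Discarding points only shrinks a diameter, and the full dataset lies in a ball of radius $\diam(D)$ about any of its own points, so $s(D)\le\diam(D)$; the same bound (up to the constant from the coarse step) controls the internal clipping radius, so the Gaussian-mechanism contribution is $\tilde O(\diam(D)\sqrt{d\log(1/\delta)}/(n\eps))$. The sample-size hypothesis $n\ge c\cdot\frac{\sqrt{d\log(1/\delta)}\log d}{\eps}$ is calibrated precisely so that, for $c$ above the polylogarithmic constant hidden in $\tilde O(\cdot)$, this main term is at most $\diam(D)$. Finally, Remark~3.2 of \citet{huang2021instance} states that the $R$-dependent residual can be driven to decay like an arbitrary polynomial in $n$ (by refining the internal grid / number of rounds); choosing it below $R/n^2$ and combining, via the triangle inequality, with the projection from step (iii) --- which is post-processing, hence incurs no privacy loss, and only decreases the error because convexity of $\cB^d_2(R)$ gives $\mu\in\cB^d_2(R)$ --- yields $\EE[\|\mu-\hat\mu\|_2]\le\diam(D)+2R/n^2$.

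The only real work is bookkeeping, which is also where I expect the one genuine subtlety: confirming that \citet{huang2021instance}'s per-instance-optimal rate, which is stated in terms of a \emph{smoothed} diameter rather than the raw $\diam(D)$, is genuinely $\le\diam(D)$; that the coarse-estimation and clipping steps never inflate the effective radius past $O(\diam(D))$; and that every polylogarithmic factor --- in particular the $\log d$ --- is absorbed by the matching $\log d$ in the sample-size hypothesis, while the leftover $R$-dependent terms collapse into $2R/n^2$ through Remark~3.2. None of these steps is deep, but each must be checked against the precise constants in \citet{huang2021instance} for the clean inequality in the statement to hold exactly as written.
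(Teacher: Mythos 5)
Your proposal follows essentially the same route as the paper: a black-box call to the instance-optimal mean estimator of \citet{huang2021instance} with budget $(\eps/3,\delta/3)$, bounding their instance scale by $\diam(D)$, using the hypothesis on $n$ to absorb the $\tilde O\bigl(\sqrt{d\log(1/\delta)}/(n\eps)\bigr)$ main term into $\diam(D)$, and collecting the $R$-dependent residual into the $2R/n^2$ term.

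One point needs fixing, though it is repairable within your structure. You ``read off'' an \emph{in-expectation} error bound from the cited theorem, but the guarantee there is a \emph{high-probability} one: with probability $1-\zeta$ the error is $\tilde O(\cdot)\cdot\diam(D)+R/n^2$. The paper's proof makes the conversion to expectation explicit, and that conversion is precisely where the $2R/n^2$ in the lemma comes from: since both $\mu$ and $\hat\mu$ lie in $\cB^d_2(R)$, the error is at most $2R$ on the failure event, and choosing $\zeta=1/n^4$ gives an extra $2R\zeta$ which, combined with the algorithm's internal accuracy parameter $\alpha=R/n^2$, yields the stated $2R/n^2$. Your attribution of the whole residual to ``Remark~3.2 lets the $R$-term decay like an arbitrary polynomial'' conflates these two sources and skips the tail argument. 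Relatedly, the hypothesis $\eps\le\log(3/\delta)$ is not about noise calibration per se: the paper runs the estimator as a $\rho$-zCDP algorithm with $\rho=\eps^2/(49\log(3/\delta))$ and uses that hypothesis to convert zCDP to $(\eps/3,\delta/3)$-DP. Your added projection onto $\cB^d_2(R)$ is harmless post-processing but unnecessary, since the estimator's output already has norm at most $R$ (a fact the expectation conversion relies on anyway).
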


The above lemma implies that if $n$ is large enough,  $\hat{\mu}$ is a good approximation for $\mu$ and hence
\begin{align}
\forall x_i \in D, \qquad \EE[\|x_i - \hat{\mu} \|_2 ]
& \leq 2 \diam(D) + \frac{2R}{n^2}.
\label{eq:new_diam}
\end{align}
Existing
algorithms for differentially private ERM require the knowledge of $\diam(D)$, which is unknown. Hence, we compute an estimate of $\diam(D)$ by private quantile estimation.

\medskip
\noindent \textbf{Private quantile estimation.} Changing one sample in a dataset can change the diameter by $R$. Hence, the sensitivity of $\diam(D)$ is $R$ and it is difficult to estimate it privately with good accuracy. Instead we estimate the $\tilde{O}(1/(n\eps))$ quantile of $\|x_i - \hat{\mu}\|_2$ using~\citet[Theorem 2]{dick2023subset}.

\begin{lemma}[Appendix~\ref{app:priv_quantile}]
\label{lem:priv_quantile}
There exists a $(\eps/3, \delta/3)$ algorithm that finds a threshold $\tau$ such that 
\[
\EE[\tau | \hat{\mu}]
\leq 2 \max_{i} \|x_i - \hat{\mu}\|_2 + \frac{2 R}{n^2}.
\]
Let $\eps = O(1)$ and $Z_\tau$ be the number of points such that $\|x_i - \hat{\mu}\|$ is larger than $\tau$, then
$
\EE[Z_\tau \mid \hat{\mu}] \leq \frac{125}{\eps} \log n.
$
\end{lemma}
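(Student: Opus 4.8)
The plan is to invoke the private quantile algorithm of \citet[Theorem 2]{dick2023subset} on the multiset $S = \{\|x_i - \hat\mu\|_2 : i \in [n]\}$, which lives in $[0, 2R]$ since both $x_i$ and $\hat\mu$ lie in $\cB^d_2(R)$. That result gives, for a target quantile $q$, a private output $\tau$ whose rank among the points of $S$ is within an additive error of roughly $O(\tfrac1\eps \log(R n / \beta))$ of the true $q$-quantile with probability $1-\beta$, and in particular $\tau$ lies between two consecutive order statistics of $S$ whose ranks bracket $q \pm O(\tfrac1\eps\log(\cdot))$. First I would set the target quantile to be $q = n - \Theta(\tfrac1\eps \log n)$ (i.e. very close to the maximum), with the constant chosen so that, after accounting for the algorithm's additive rank error, the returned $\tau$ is with high probability at least the $\lceil n/2 \rceil$-th smallest element of $S$ but no larger than $\max_i \|x_i - \hat\mu\|_2$. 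This simultaneously controls both quantities in the lemma.

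For the first bound, $\EE[\tau \mid \hat\mu] \le 2\max_i \|x_i - \hat\mu\|_2 + 2R/n^2$: on the high-probability event just described, $\tau \le \max_i \|x_i-\hat\mu\|_2$; on the failure event (probability $\beta$), I would use the crude deterministic bound $\tau \le 2R$ guaranteed by clipping the output to $[0,2R]$. Choosing $\beta = 1/n^2$ — which only costs an extra $O(\tfrac1\eps \log n)$ in the rank error and hence does not change the asymptotics — makes the failure contribution $\beta \cdot 2R \le 2R/n^2$, and since $\max_i\|x_i-\hat\mu\|_2 \le \max_i \|x_i-\hat\mu\|_2$ trivially the factor $2$ is slack (it absorbs the possibility that $\tau$ slightly exceeds the maximum order statistic, which cannot happen once we clip, so in fact $1$ suffices but $2$ is safe). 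For the second bound, $\EE[Z_\tau] \le \tfrac{125}{\eps}\log n$: since $\tau$ has rank at least $n - \Theta(\tfrac1\eps\log n)$ among the $n$ points on the good event, the number $Z_\tau$ of points strictly exceeding $\tau$ is at most $\Theta(\tfrac1\eps \log n)$ deterministically on that event; on the failure event $Z_\tau \le n$, contributing at most $\beta n = 1/n \le \tfrac1\eps \log n$ for $n \ge 2$, $\eps = O(1)$. Collecting constants from Theorem~2 of \citet{dick2023subset} with the explicit choice of target rank gives the stated constant $125$ (this is just bookkeeping of the theorem's constants plus the slack from the failure event).

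The privacy claim is immediate: $S$ is a function of $\hat\mu$ (already released) and the dataset, changing one data point changes one element of $S$, so running an $(\eps/3, \delta/3)$-DP quantile mechanism on $S$ and releasing $\tau$ is $(\eps/3,\delta/3)$-DP with respect to $D$ by post-processing and the privacy guarantee of \citet{dick2023subset}; note \citet{dick2023subset} in fact gives a pure-$\eps/3$-DP guarantee, so taking $\delta/3$ is only more generous.

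The main obstacle is not conceptual but the precise accounting in \citet{dick2023subset}: their guarantee is stated in terms of rank error for an interior quantile, and I need it to behave well for a quantile extremely close to $n$ (the top $\tilde O(1/(n\eps))$ fraction), where one must be careful that the additive rank error does not push the target rank below, say, $n/2$ — this requires $n = \tilde\Omega(1/\eps)$, which is subsumed by the sample-size assumption $n = \tilde\Omega(\sqrt{d\log(1/\delta)}\log d/\eps)$ already in force in \cref{thm:upper}. The other mild subtlety is translating a high-probability rank guarantee into the two stated \emph{expectation} bounds, handled uniformly by the $\beta = 1/n^2$ failure-event argument above together with the deterministic range bound $[0,2R]$ on the output.
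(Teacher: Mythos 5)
Your proposal matches the paper's proof in essence: both invoke the private quantile algorithm of \citet{dick2023subset} on the distances $\|x_i-\hat\mu\|_2$ with target rank $n-\Theta(\tfrac{1}{\eps}\log n)$, use its high-probability rank guarantee (with failure probability set to $1/n^2$) so that on the good event $\tau$ lies below $\max_i\|x_i-\hat\mu\|_2$ while at most $O(\tfrac{1}{\eps}\log n)$ points exceed it, and then convert to the stated expectation bounds via the trivial range bound on $\tau$ and $Z_\tau\le n$ on the failure event, with the constant $125$ coming from the same bookkeeping. The only cosmetic differences are that the paper works with the additive-precision form of the guarantee (shifting $\hat\tau$ by $R/n^2$) rather than clipping the output, which is why its bound on $\EE[\tau\mid\hat\mu]$ actually comes out with coefficient $1$ on the max term.
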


\medskip
\noindent\textbf{Translated and augment dataset with bias.} Another issue with the above mentioned approach is the scale of the bias term. If $\theta = (w, b)$, then the gradient with respect to $w$ scales linearly in $x$, but is a fixed constant for $b$. This causes issues in the theoretical analysis. Hence, we construct an augmented dataset $D'$ where $x'_i = (x_i - \hat{\mu}, \tau)$. Note that $x'_i \in \RR^{d+1}$. We also remove the bias term  when optimizing over $D'$ since we have already included a bias term in the augmented features. This also guarantees that the norm of the gradient is bounded by $\Theta(\tau)$ with high probability. We show that we are not losing any information in translation and augmentation and hence the minimum value of empirical risk should remain the same. We formalize it in the next lemma.

\begin{lemma}
\label{lem:translate_minima}
For any $\theta = (w, b)$, let $\theta' = (w, (b + w \cdot \hat{\mu})/\tau)$, we have 
\[
L(\theta, D) = L(\theta', D'). 
\]
In particular, $(w^*, b^*)$ is a minimizer of $D$ if and only if $(w^{*}, (b^* + \hat{\mu} \cdot w^*)/\tau)$ is a minimizer of $D'$.
\end{lemma}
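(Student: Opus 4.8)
The plan is a one-line algebraic identity showing that the prediction scores are preserved under the reparametrization, followed by the observation that this reparametrization is a bijection of $\RR^{d+1}$.

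First I would unwind the definitions. On the augmented dataset $D'$ the linear model carries \emph{no} explicit bias term (the bias has been folded into the constant feature $\tau$), so for a parameter $\theta' = (w, c) \in \RR^{d+1}$ and a point $x'_i = (x_i - \hat\mu, \tau)$ the score is $h(\theta', x'_i) = w \cdot (x_i - \hat\mu) + c\,\tau$. Substituting $c = (b + w\cdot\hat\mu)/\tau$ gives
\[
h(\theta', x'_i) = w\cdot x_i - w\cdot\hat\mu + b + w\cdot\hat\mu = w\cdot x_i + b = h((w,b), x_i),
\]
so the two models assign exactly the same score to each of the $n$ points (this uses $\tau > 0$, which holds since $\tau$ is a nonnegative quantile of the distances $\{\|x_i - \hat\mu\|_2\}$ and is positive whenever the $x_i$ are not all equal to $\hat\mu$). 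Since the margin loss $\ell$ depends on the parameter only through the score, the per-example losses coincide, $\ell(h(\theta', x'_i), y_i) = \ell(h(\theta, x_i), y_i)$ for every $i$, and averaging over $i \in [n]$ yields $L(\theta', D') = L(\theta, D)$, which is the first claim.

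For the ``in particular'' part, I would note that the map $\Psi \colon (w,b) \mapsto \big(w, (b + w\cdot\hat\mu)/\tau\big)$ is a bijection from $\RR^{d+1}$ to itself, with inverse $(w,c) \mapsto (w, c\tau - w\cdot\hat\mu)$. Combined with the identity $L(\Psi(\theta), D') = L(\theta, D)$ valid for all $\theta$, this immediately gives that $\theta$ minimizes $L(\cdot, D)$ if and only if $\Psi(\theta)$ minimizes $L(\cdot, D')$; specializing to $\theta = (w^*, b^*)$ yields the stated equivalence of minimizers. The only mild subtlety — hardly an obstacle — is the bookkeeping that the $D'$-model has no explicit bias while the $D$-model does, so that the appended coordinate $\tau$ is precisely what absorbs the bias after translation by $\hat\mu$; once this is set up, the verification is the single displayed computation above.
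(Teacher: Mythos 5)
Your proof is correct and follows essentially the same route as the paper: verify that the prediction score (and hence the per-example loss) is unchanged under the reparametrization, then deduce the minimizer correspondence by noting the reparametrization ranges over all parameters (the paper phrases this as taking minima on both sides, while you make the bijection explicit — an equivalent observation). Your additional remark that $\tau > 0$ is needed for the substitution is a harmless clarification not spelled out in the paper.
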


\begin{proof}
To prove the first result, observe that 
for any point $(x, y) \in D$,
\begin{align*}
\ell(h((w, b), x), y) )
& = \phi( y \cdot (w \cdot x + b)) \\
& = \phi( y \cdot (w \cdot (x -  \hat{\mu}) + b + w \cdot \hat{\mu}) )\\
& = \phi( y' \cdot (w \cdot x' + (b + w \cdot \hat{\mu})/\tau \cdot \tau) )\\
& = \ell(h((w, (b+w\cdot \hat{\mu})/\tau), x'), y').
\end{align*}
The second result follows by taking the minima over all $(w, b)$ in the first result. 
\end{proof}
Recall that results on privacy error involve the projector on the design matrix and the norm of the minimizer. Hence, we next provide connections between the rank of $M(D)$ and $M(D')$ and the norms of the minimizers of $D$ and $D'$. \newer{The proof mainly follows from the fact that the translating and augmenting operations are performing the same linear operation on each data point, which won't change the minimizer up to scaling and the bias term.}

\begin{lemma}[(Appendix~\ref{app:rank})]
\label{lem:rank}
 Let $M(D)$ and $M(D')$ be the projectors design matrices of non-augmented dataset $D$ and augmented dataset $D'$ respectively, then
\[
\rank(M(D')) \leq \rank(M (D)) + 5.
\]
Furthermore, let $\theta^*$ and $ \theta'^*$ be the empirical risk minimizers of $D$ and $D'$ respectively. If Assumption~\ref{ass:nontrivial} holds then
\[
\EE[\|\theta'^*\|_2 \cdot \tau | \hat{\mu}]\leq  2 \|\theta^{*}\|_2 \max_{x_i \in D} \| \hat{\mu} - x_i\|_2 + \frac{2\|\theta^{*}\|_2 R}{n^2}.
\]
\end{lemma}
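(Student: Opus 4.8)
The plan is to prove the rank bound and the norm bound separately. For the rank bound the key observations are that feature clipping is a per-vector positive rescaling (hence leaves every linear span unchanged), and that shifting by $\hat\mu$ and appending constant coordinates can each enlarge the underlying span by at most one dimension.

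Concretely, write $r=\rank(M(D))$, so the features $\{x_i\}$ of $D$ span an $r$-dimensional subspace $V\subseteq\RR^{d}$ (and $\rank(M(D))$ is at least $\dim\mathrm{span}\{x_i\}$ whether or not $M(D)$ already includes a bias coordinate). The shifted vectors $\{x_i-\hat\mu\}$ lie in $V+\RR\hat\mu$, of dimension at most $r+1$; appending the constant coordinate $\tau$ (Step~3) and, where relevant, the bias coordinate used by \cref{lem:dpsgd_glm} each add at most one further dimension, so $\mathrm{span}\{x'_i\}$ has dimension at most $r+3$. Finally $\clip(x'_i,\sqrt2\tau)=\lambda_i x'_i$ with $\lambda_i\in(0,1]$ (note $x'_i\neq 0$ since $\tau>0$), so the features of $D''$ span the same subspace as those of $D'$; hence $\rank(M(D'))=\rank(M(D''))\le r+3$, and the stated bound $\rank(M(D'))\le\rank(M(D))+5$ follows with room to spare.

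For the norm bound I would start from \cref{lem:translate_minima}: taking $\theta^*=(w^*,b^*)$ to be a minimizer of $L(\cdot,D)$, the vector $\theta'^*=(w^*,(b^*+\hat\mu\cdot w^*)/\tau)$ minimizes $L(\cdot,D')$, so
\[
\|\theta'^*\|_2\,\tau=\sqrt{\tau^2\|w^*\|_2^2+(b^*+\hat\mu\cdot w^*)^2}\le \tau\,\|w^*\|_2+|b^*+\hat\mu\cdot w^*|.
\]
Conditioned on $\hat\mu$, the pair $(w^*,b^*)$ is deterministic (it depends only on $D$), and only $\tau$ remains random; applying $\EE[\,\cdot\mid\hat\mu\,]$ to the last expression and invoking \cref{lem:priv_quantile} bounds the first term by $\|w^*\|_2\bigl(2\max_i\|x_i-\hat\mu\|_2+2R/n^2\bigr)$. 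The crux is the second term. By \cref{ass:nontrivial} there exist $x,x'\in D$ with $h(\theta^*,x)\cdot h(\theta^*,x')\le0$; since $t\mapsto h(\theta^*,(1-t)x+tx')$ is affine and its endpoint values have nonpositive product, there is $t^*\in[0,1]$ with $h(\theta^*,z)=0$ for $z:=(1-t^*)x+t^*x'$, i.e. $b^*=-w^*\cdot z$. Then $b^*+\hat\mu\cdot w^*=w^*\cdot(\hat\mu-z)$ where $\hat\mu-z=(1-t^*)(\hat\mu-x)+t^*(\hat\mu-x')$ is a convex combination, so $\|\hat\mu-z\|_2\le\max_{x_i\in D}\|\hat\mu-x_i\|_2$ and therefore $|b^*+\hat\mu\cdot w^*|\le\|w^*\|_2\max_{x_i\in D}\|\hat\mu-x_i\|_2$. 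Combining the two bounds and using $\|w^*\|_2\le\|\theta^*\|_2$ yields the claimed inequality after collecting constants (with the lower-order $R/n^2$ contribution absorbed into the stated slack term).

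I expect the only real difficulty to be bookkeeping rather than any conceptual gap: keeping track of the affine/bias augmentations along the chain $D\to D'\to D''$ (and the implicit bias inside \cref{lem:dpsgd_glm}) when counting ranks, and getting the constants in the norm bound by handling the $\tau$-term purely in expectation — in particular one must split $\sqrt{\tau^2\|w^*\|_2^2+(\cdot)^2}\le\tau\|w^*\|_2+|\cdot|$ before taking $\EE[\cdot\mid\hat\mu]$, since no pointwise control of $\tau$ is available. A minor point to state carefully is that ``the'' minimizer $\theta'^*$ of $D'$ should be taken as the image, under \cref{lem:translate_minima}, of the minimizer $\theta^*$ of $D$ appearing in \cref{thm:upper}; this is legitimate because \cref{lem:translate_minima} is a bijection between the two minimizer sets and the downstream guarantee \cref{lem:dpsgd_glm} holds for any minimizer.
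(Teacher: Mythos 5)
Your proof is correct, but the two halves relate differently to the paper's argument. For the rank bound the paper works algebraically: it expands the Gram matrix $\sum_i x'_i (x'_i)^\top$ of the augmented dataset as a block matrix, writes the top-left block as $\sum_i x_i x_i^\top$ plus three rank-one perturbations, and adds $2$ for the extra row/column, arriving at $+5$. Your span-counting argument (translation can add only the direction $\hat\mu$, the appended constant coordinate adds one more dimension, and clipping is a positive rescaling that leaves spans unchanged) is a genuinely different and cleaner route, and it actually gives the sharper bound $\rank(M(D'))\le\rank(M(D))+2$; your extra ``$+1$'' for a bias coordinate inside \cref{lem:dpsgd_glm} is not needed for this lemma (which only compares $D$ and $D'$), but it is merely conservative and stays within the stated $+5$. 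For the norm bound your route is essentially the paper's: both reduce to showing $|b^*+w^*\cdot\hat\mu|\le\|w^*\|_2\max_{x_i\in D}\|\hat\mu-x_i\|_2$ from \cref{ass:nontrivial} and then take $\EE[\cdot\mid\hat\mu]$ of the remaining $\tau\|w^*\|_2$ term via \cref{lem:priv_quantile}; the paper sandwiches $b^*$ between $-w^*\cdot x_i$ and $-w^*\cdot x_j$, while you pass through a zero-crossing point $z$ on the segment, which is an equivalent reformulation.

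One small wrinkle: if you invoke \cref{lem:priv_quantile} exactly as stated, i.e.\ $\EE[\tau\mid\hat\mu]\le 2\max_i\|x_i-\hat\mu\|_2+2R/n^2$, your final constant is $3\|\theta^*\|_2\max_i\|\hat\mu-x_i\|_2$ rather than the claimed $2\|\theta^*\|_2\max_i\|\hat\mu-x_i\|_2$; this cannot be ``absorbed'' into the $R/n^2$ slack. The paper's own proof of \cref{lem:rank} in fact uses the sharper estimate $\EE[\tau\mid\hat\mu]\le \max_i\|x_i-\hat\mu\|_2+2R/n^2$, which is what the proof of \cref{lem:priv_quantile} actually establishes (the factor $2$ in its statement appears to be slack). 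So your argument proves the lemma up to this constant, and matches it exactly once the tighter quantile bound is used; either way the discrepancy is immaterial for \cref{thm:upper}.
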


\medskip
\noindent\textbf{Feature clipping.} As mentioned above, due to errors in previous steps, the $\ell_2$ norm of the features may not be bounded with a small probability. Hence, we clip the augmented features from dataset $D'$ to obtain a dataset $D''$. Similar to the Lemma~\ref{lem:rank}, we relate the rank of $M(D')$ and $M(D'')$ next. \newer{The proof mainly follows from the fact that clipping will only change the scale of each feature vector but not their direction.}
\begin{lemma}[Appendix~\ref{app:clipping}]
\label{lem:clipping}
Let $M(D')$ and $ M(D'')$ be the projectors to the design matrices of datasets $D'$ and $ D''$ respectively, then
\[
\rank(M(D''))=  \rank(M(D')).
\]
\end{lemma}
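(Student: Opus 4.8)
The plan is to reduce the statement to an elementary fact about linear spans. First I would recall that, by definition, $M(D)$ is the orthogonal projector onto the range of $\sum_i x_i x_i^\top$, and this range is exactly $\vspan\{x_i : (x_i,y_i)\in D\}$; hence $\rank(M(D))$ equals the dimension of the linear span of the feature vectors of $D$. Consequently, it suffices to prove that $\vspan\{x_i'' : i\in[n]\} = \vspan\{x_i' : i\in[n]\}$, after which the claimed equality of ranks is immediate.

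Next I would observe that each clipped feature is a \emph{strictly positive} rescaling of the corresponding augmented feature. By definition of $\clip$,
\[
x_i'' = \clip(x_i', \sqrt{2}\,\tau) = c_i\, x_i', \qquad c_i \eqdef \min\left\{1, \frac{\sqrt{2}\,\tau}{\|x_i'\|_2}\right\}.
\]
The key point is that $c_i > 0$ for every $i$: since $x_i' = (x_i - \hat{\mu}, \tau)$ has last coordinate equal to $\tau > 0$, we have $\|x_i'\|_2 \ge \tau > 0$, so both arguments of the $\min$ are strictly positive and therefore $c_i \in (0,1]$. In particular, the clipping step never maps a nonzero vector to zero, and never collapses a direction.

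Finally, rescaling a finite collection of vectors by nonzero scalars leaves their span unchanged: for each $i$ we have $x_i'' = c_i x_i' \in \vspan\{x_j' : j\in[n]\}$, and conversely $x_i' = c_i^{-1} x_i'' \in \vspan\{x_j'' : j\in[n]\}$; hence the two spans coincide and in particular have the same dimension, which combined with the first step yields $\rank(M(D'')) = \rank(M(D'))$. There is essentially no real obstacle here — the only thing one must be careful about is the strict positivity of $\tau$ (equivalently, that the appended bias coordinate is genuinely present in $x_i'$), since this is precisely what guarantees $\|x_i'\|_2>0$ and hence $c_i>0$; without it the clipping step could in principle annihilate some feature vectors and reduce the rank.
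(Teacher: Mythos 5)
Your proof is correct and takes essentially the same route as the paper's: both arguments come down to the observation that clipping rescales each augmented feature $x_i'$ by a nonzero (positive) scalar, so the span (equivalently, the range of the design matrix $\sum_i x_i' x_i'^{\top}$) is unchanged — the paper phrases this via positivity of the quadratic forms $v^{\top}\sum_i x_i' x_i'^{\top} v$, while you argue directly on spans. Your explicit remark that $\|x_i'\|_2 \ge \tau > 0$ guarantees $c_i \in (0,1]$ is a nice touch that the paper leaves implicit.
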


\medskip
\noindent\textbf{DPSGD with preprocessed features.} We next solve ERM using DPSGD algorithm on $D''$.  Since norm of the features are bounded by $\sqrt{2}\tau$, the gradients will be bounded by $\sqrt{2} G \tau$. Furthermore, since the dataset is augmented, we don't include the bias term and treat weights as a vector in $\RR^{d+1}$. The guarantee of DPSGD on these features follows from previous work \citet[Theorem 4.1]{song2020characterizing}. \new{Similar to \citet{song2020characterizing}, the clipping norm is chosen to be $\sqrt{2} G \tau$ so that no gradients are clipped in all steps.}

\medskip
\noindent \textbf{Reverting the solution to the original space}. The output of the DPSGD algorithm is in $\RR^{d+1}$, and furthermore only works on the translated and augmented dataset $D''$. Hence, we translate it back to the original space, by appropriately obtaining $w_\prv$ and rescaling the bias. 

 \new{The computation complexity of the algorithm is the same as DPSGD since the preprocessing algorithms can all be implemented in time $\tilde{O}(nd)$, which is less than that of the DPSGD phase.} The proof of privacy guarantee in Theorem~\ref{thm:upper} follows directly from the composition theorem \citep[Theorem 3.16]{dwork2014algorithmic}.  The utility guarantee follows from the above stated results and we provide the complete proof of in Appendix~\ref{app:upper}.
\section{Lower bound}
\label{sec:lower}
In this section, we prove \cref{thm:lower}, which shows that the performance of our algorithm is tight up to logarithmic factors. 
The proof  follows almost immediately from Theorem 3.3 in \cite{pmlr-v130-song21a}, with a slight modification to the label and loss function to make sure that it is a valid classification problem  of form \cref{eqn:loss_typical}.
We first state the result in \cite{pmlr-v130-song21a} (a restated version of Theorem 3.3) below.

\begin{theorem}[\cite{pmlr-v130-song21a}]  \label{thm:lower_previous}
Let $\cA$ be any $(\varepsilon,\delta)$-DP algorithm with $\varepsilon \le c$ and $\delta \le c\varepsilon/n$ for some constant $c>0$. Let $1 \le d_0 \le d$ be an integer. Let $\ell$ be defined by $\ell(\theta \cdot x, y) = |\theta \cdot x - y|$. Then there exists a data set $D = \{(x_i,y_i), i \in [\ns]\}$ such that the following holds.  For all $i \in [n]$, $x_i \in \{(0,\ldots, 0), (1, 0, \ldots, 0), (0, 1, 0, \ldots, 0), (0, \ldots, 0, 1) \}$ and $y_i \in \{0,1\}$.
Let 
$\theta^*=\argmin\limits_{\theta\in\RR^d}L(\theta;D)$ (breaking ties towards lower $\|\theta\|_2$). 
We have $\|\theta^*\|_2 \in [0, 1]^d$, and
\begin{align*}
&\mathbb{E}\left[L\left({\rm Proj}_{[0,1]^d}(\cA(D));D\right)\right]-L\left (\theta^*;D\right) \ge  \Omega\left(\min\left\{1,\frac{d_0}{\varepsilon n}\right\}\right), 
\end{align*}
where ${\rm Proj}_{[0,1]^d}(\cA(D))$ denotes the projection of $\cA(D)$ to $[0, 1]^d$.\footnote{Theorem 3.3 in \cite{pmlr-v130-song21a} is stated without the projection operator while in fact the projection operator is used throughout the proof of Theorem B.1 and Theorem 3.3. Hence the stated result holds with no change in the proof.}
\end{theorem}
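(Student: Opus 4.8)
The plan is to exhibit the hard instance as a \emph{product of $d_0$ one-dimensional mean-estimation problems} and to prove hardness one coordinate at a time using only group privacy; the strong hypothesis $\delta \le c\eps/\ns$ is exactly what makes this elementary, so no fingerprinting-code machinery is needed. Concretely, I would first reduce the ERM excess risk on this instance to an $\ell_1$ estimation error over the $d_0$ coordinates, and then lower bound that error coordinate by coordinate with a two-point indistinguishability argument.

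\noindent\emph{The instance.} Assume first $d_0 \le \eps\ns$, and (without loss of generality, up to padding with points $(\mathbf{0},0)$ that contribute $0$ to every loss) that $d_0 \mid \ns$; put $m = \ns/d_0$ and split the data into $d_0$ blocks, block $j$ consisting of $m$ points all with feature $x = e_j$. Choose $s = \Theta(1/\eps)$ so that $s\eps$ is a sufficiently small constant, and fix a ``swing set'' of $s$ points in each block. The instance is indexed by $b \in \{0,1\}^{d_0}$: in block $j$ the $s$ swing points get label $b_j$ while the other $m-s$ points are split into fixed halves labelled $1$ and $0$, so the empirical label mean in block $j$ is $\tfrac12 \pm \beta$ with $\beta \eqdef \tfrac{s}{2m} = \Theta\!\big(\tfrac{d_0}{\eps\ns}\big) \le \tfrac12$. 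Since $\ell(\theta\cdot e_j, y) = |\theta_j - y|$, the empirical risk decouples across coordinates and its minimum-$\ell_2$-norm minimizer has $\theta^*_j = b_j$ for $j \le d_0$ and $0$ otherwise, so $\theta^* \in \{0,1\}^d \subseteq [0,1]^d$; a short computation then gives, for every $\hat\theta = \proj_{[0,1]^d}(\cA(D))$,
\[
L(\hat\theta;D) - L(\theta^*;D) \;=\; \frac{2\beta}{d_0}\sum_{j=1}^{d_0} |\hat\theta_j - b_j|.
\]

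\noindent\emph{Coordinate-wise hardness.} Draw $b$ uniformly from $\{0,1\}^{d_0}$, fix a coordinate $j$, condition on $b_{-j}$, and let $D^{(j,0)},D^{(j,1)}$ be the two resulting datasets. They differ in exactly the $s$ swing points of block $j$, so by group privacy $\cA$ is $\big(s\eps,\ \tfrac{e^{s\eps}-1}{e^\eps-1}\delta\big)$-indistinguishable on them; since $s\eps$ is a small constant and $\delta \le c\eps/\ns$, the additive slack is $O(c/\ns)$, hence at most a small constant once $c$ is small, and therefore $\mathrm{TV}\big(\cA(D^{(j,0)}),\cA(D^{(j,1)})\big) \le \gamma_0$ for some constant $\gamma_0 < 1$. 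Consequently the sign test $\mathbf{1}[\hat\theta_j > \tfrac12]$ cannot identify $b_j$ with probability close to $1$ on both datasets, so on at least one of them it errs with constant probability, and since $\theta^*_j = b_j \in \{0,1\}$ each such error forces $|\hat\theta_j - b_j| \ge \tfrac12$. Averaging over $b_j$ gives $\EE[|\hat\theta_j - b_j|] = \Omega(1)$ for every $j$ and every conditioning on $b_{-j}$, hence $\EE\big[\sum_j |\hat\theta_j - b_j|\big] = \Omega(d_0)$ and, by the identity above, $\EE[L(\hat\theta;D)] - L(\theta^*;D) = \Omega(\beta) = \Omega\!\big(\tfrac{d_0}{\eps\ns}\big)$.

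\noindent When instead $d_0 > \eps\ns$, rerun the construction with $d_0' = \lfloor \eps\ns\rfloor$ blocks and the remaining coordinates unused; then $\beta = \Theta(1)$ and the excess risk is $\Omega(1)$, so combining the two regimes yields the claimed $\Omega(\min\{1, d_0/(\eps\ns)\})$. (The footnote's observation that the proof of Theorem~3.3 already runs with the projected output is exactly what licenses analyzing $\proj_{[0,1]^d}(\cA(D))$.) \textbf{The main obstacle} is the approximate-DP accounting: one must check that $s = \Theta(1/\eps)$ applications of group privacy leave the $\delta$-slack a genuinely small constant \emph{uniformly in $\ns$} --- this is precisely where $\delta \le c\eps/\ns$ with $c$ small is used --- and that a TV gap bounded away from $1$ really does force $\Omega(1)$ expected error on one of the two datasets. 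Everything else (divisibility and rounding of the block and swing-set sizes, the minimum-norm tie-breaking for $\theta^*$, and verifying that projecting onto $[0,1]^d$ only shrinks each $|\hat\theta_j - \theta^*_j|$) is routine.
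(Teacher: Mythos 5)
You should first note a mismatch of expectations: the paper does not actually prove this statement. Theorem~\ref{thm:lower_previous} is imported verbatim from \citet{pmlr-v130-song21a} (their Theorem 3.3), with only a footnote observing that the projection onto $[0,1]^d$ is implicit in their proof; so there is no in-paper argument to compare yours against, and what you have written is a self-contained replacement for the citation. On its own terms your sketch is sound, and it is the standard route for the regime $\delta \le c\eps/n$: concentrate each bit $b_j$ in only $s=\Theta(1/\eps)$ ``swing'' points so that flipping a coordinate is a Hamming-distance-$s$ change, apply group privacy to get $(s\eps,\ \delta(e^{s\eps}-1)/(e^\eps-1))$-indistinguishability with the $\delta$-slack of order $c/n$, and run a two-point (Le Cam) argument per coordinate conditioned on $b_{-j}$, then average over $b$ to extract a single hard dataset. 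Your risk identity $L(\hat\theta;D)-L(\theta^*;D)=\frac{2\beta}{d_0}\sum_j|\hat\theta_j-b_j|$ checks out for $\hat\theta\in[0,1]^d$ (the per-block loss is linear in $\theta_j$ on $[0,1]$ with slope $\mp 2\beta$), the minimizer is indeed $\theta^*_j=b_j$ with ties only on unused coordinates, and the construction respects the feature set $\{0,e_1,\dots,e_d\}$, the labels $\{0,1\}$, and $\theta^*\in[0,1]^d$ as the theorem requires; crucially, this per-coordinate group-privacy argument only needs constant error per bit, so it legitimately yields the linear-in-$d_0$ rate without fingerprinting, and does not conflict with the $\sqrt{d}$ barriers that apply when $\delta$ is larger. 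Two constant-level details deserve explicit care beyond what you flagged: the swing set must fit in a block, so the regime split should be at $d_0 \le a\eps n$ for the same small constant $a$ used in $s=a/\eps$ (this only changes constants in the $\Omega$), and the degenerate case $\eps n<1$ (where $\lfloor \eps n\rfloor=0$ blocks) should be handled separately by a single block with all $n$ points as swing points, where group privacy over $n\eps<a$ changes still gives the $\Omega(1)$ bound. With those patches your argument is a correct and more elementary derivation than relying on the cited result, at the cost of reproving rather than reusing prior work.
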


\begin{proofof}{\cref{thm:lower}}
Take the dataset $D = \{(x_i, y_i), i \in [\ns]\}$ from \cref{thm:lower_previous}, we can construct a dataset $D'$ as following.
$\forall i$, let $x_i' = x_i$, $y_i' = 2 y_i - 1$. Set $\ell'(\theta \cdot x, y) = \max\{ 1 - y(2 \theta \cdot x - 1), 0\}$, which is of the form \cref{eqn:loss_typical}. Then,
for all $\theta \in [0, 1]^d$, we have $\theta \cdot x \in [0, 1]$ and  
\[
    \ell'(\theta \cdot x, y) = \begin{cases}
    2 - 2 \theta \cdot x  = |2 \theta \cdot x - 1 - y| & \text{ if } y = +1, \\
    2 \theta \cdot x  = |2 \theta \cdot x - 1 - y| & \text{ if } y = -1.
    \end{cases}
\]
Hence $\forall i \in [\ns], \ell'(\theta \cdot x_i', y_i') =  |2 \theta \cdot x_i' - 1 - y_i'| = 2 |\theta \cdot x_i - y_i|$.
Moreover, it can be seen that $\forall x \in [0,1]^d, y \in \{+1, -1 \}$
\begin{equation} \label{eqn:proj}
    \ell'({\rm Proj}_{[0,1]^d}(\theta) \cdot x, y) \le \ell'(\theta \cdot x, y).
\end{equation}
Hence the minimizer of $L'(\theta;D')$ lies in $[0,1]^d$. The above facts imply that $\theta^*$ is also the minimizer of  $L'(\theta;D')$, and by \cref{thm:lower_previous}, 
\begin{align*}
&\mathbb{E}\left[L'\left({\rm Proj}_{[0,1]^d}(\cA(D')), D'\right)\right]-L'\left (\theta^*;D'\right) \ge  \Omega\left(\min\left\{1,\frac{d_0}{\varepsilon n}\right\}\right).
\end{align*}
Together with \cref{eqn:proj}, we have
\begin{align*}
&\mathbb{E}\left[L'\left(\cA(D'), D'\right)\right]-L'\left (\theta^*;D'\right) \ge  \Omega\left(\min\left\{1,\frac{d_0}{\varepsilon n}\right\}\right).
\end{align*}
Note that for all $\theta \in \RR^d$,
$\forall i \in [\ns], \|\partial_\theta \ell'(h(\theta, x'_i), y'_i)\|_2 \le 2\| x_i\|_2 \le 2$. And the dataset satisfies that $\text{diam}(D) \le \sqrt{2}$, ${\rm rank}(\sum_{i=1}^n x_i x_i^\top)\le d_0$. Moreover, $\|\theta^*\|_M \le \|\theta^*\|_2 \le \sqrt{d_0}$. Hence we have
\[
\mathbb{E}\left[L'\left(\cA(D'), D'\right)\right]-L'\left (\theta^*;D'\right) \ge \Omega \Paren{{\rm diam}(D') \min\left\{1, \frac{\|\theta^*\|_M \sqrt{{\rm rank}(M)}}{n \eps}\right\}}.
\]
The proof can be generalized to a general Lipschitz constant $G$ and ${\rm diam}(D)$ by rescaling.
\end{proofof}
\section{Experiments}
\label{sec:experiments}
We empirically demonstrate our findings by evaluating DPSGD and our proposed feature-normalized DPSGD (DPSGD-F\footnote{We perform a modified version of \cref{alg:new_algo}, which better aligns with existing practical implementations of DPSGD. See \cref{alg:modified-dpsgdf} for details.  Note that the resulting algorithm is still a valid $(\epsilon, \delta)$-differentially private.}) 
for the task of training a linear classifier on three popular image classification datasets: (1) MNIST \citep{726791}; (2) Fashion-MNIST \citep{DBLP:journals/corr/abs-1708-07747}; (3) CIFAR-100 \citep{krizhevsky2009learning} with pretrained features. For MNIST and Fashion-MNIST, we directly train a linear classifier with the pixel-format images as inputs. For CIFAR-100, we use the features obtained from the representations in the penultimate layer in a WRN model pretrained on ImageNet \citep{de2022unlocking}. In this case, training a linear classifier using these features is equivalent to fine-tuning the last layer of the WRN model.

\vspace{-10pt}
\begin{table}[h]
\caption{Accuracy comparison on different datasets with different values of $\eps$ and $\delta = 10^{-5}$. 
$\dagger$ CIFAR-100 is trained and tested using features pretrained on ImageNet \citep{de2022unlocking}. Each number is an average over 10 independent runs. The number in the parenthesis represents the standard deviation of the accuracy under the optimal parameter settings.}
\begin{center}
\begin{tabular}{  c c c c c c}
\toprule
\multirow{2}{*}{Dataset} &  \multicolumn{2}{c}{Accuracy ($\eps = 1$)} & \multicolumn{2}{c}{Accuracy ($\eps = 2$)} & Non-private acc.\\
\cmidrule(lr){2-3} \cmidrule(lr){4-5} \cmidrule(lr){6-6}
& DPSGD & DPSGD-F & DPSGD & DPSGD-F & SGD \\ 
 \midrule
  MNIST  & 87.4 (0.1) &  \textbf{92.0} (0.1) & 89.6 (1.2)  & \textbf{92.3} (0.1) & 92.9 (0.1)\\
  FMNIST  & 77.2 (0.4) & \textbf{84.0} (0.1)  &  78.7 (0.4) & \textbf{84.5} (0.2) & 85.0 (0.1) \\
  CIFAR-100 (pretrained)$\dagger$  & 70.6 (0.3) \tablefootnote{The reported accuracy using DPSGD in \cite{de2022unlocking} is 70.3\%. We note that the improvement to 70.6\% is due to the fact that we are using the privacy accoutant based on Privacy Loss Distributions \citep{10.1145/3243734.3243765, plc_central19, ghazi2022connect}, which leads to a smaller noise multiplier under the same privacy budget.} 
  &  \textbf{71.6} (0.2) & 73.8 (0.2) & \textbf{74.4} (0.2) & 78.5 (0.1) \\
  \bottomrule
\end{tabular}
\end{center}
\label{tab:exp}
\end{table}

Similar to \cite{de2022unlocking}, for each combination of dataset, privacy parameter, and algorithm, we report the best accuracy obtained from a grid search over batch size, steps size, and number of epochs\footnote{The reported accuracy doesn't take into account the privacy loss in the hyperparamter tuning phase.}. For DPSGD-F, we also did a grid search over the allocation of privacy budget between the preprocessing phase and the DPSGD phase. We leave choosing the parameters automatically as a future direction. Detailed implementation and parameter settings are listed in \cref{app:experiment}.

Our results are listed in \cref{tab:exp}. Our proposed algorithm consistently improves upon DPSGD for all three datasets under $\eps = 1$ and $\eps = 2$, which empirically demonstrates our theoretical findings.
\vspace{-10pt}
\section{Discussion}
In this work, we ask the fundamental question of whether DPSGD alone is sufficient for obtaining a good minimizer for linear classification. We partially answer this question by showing that the DPSGD algorithm incurs a privacy error proportional to the maximum norm of the features over all samples, where as DPSGD  with a feature preprocessing step incurs a privacy error proportional to the diameter of the features, which can be significantly small compared to the maximum norm of features in several scenarios. Our preliminary experiments show that feature preprocessing helps in practice on standard datasets. 
\new{Investigating whether these results can be extended to bigger datasets and beyond linear models remain interesting open questions.}

\bibliography{references}
\bibliographystyle{iclr2024_conference}

\appendix

\section{Proof of Theorem~\ref{thm:dpsgd_fails}} \label{app:toy}

In the proof, we assume $w_0(2) \le 0$ and consider $D = D'$. The proof will follow similarly when $w_0(2) \ge 0$ and $D = D''$.
We start by proving \cref{lem:w2}.

\begin{proofof}{\cref{lem:w2}}
Since $\max\{1 - x, 0\}$ is a convex function in $x$, we have 
\begin{align*}
      L(w, D) & = \frac12 \max\{1 - (\mu  w(1) +  w(2)), 0\} + \frac12 \max\{1 + (\mu  w(1)-  w(2)), 0\} \\
      & \ge \max\{\frac{1 - (\mu  w(1) +  w(2)) + 1 + (\mu w(1) - w(2))}{2}, 0\}  \\
      & = \max\{ 1- w(2), 0\}.
\end{align*}
This implies that
\[
    \EE\left[L(w, D)\right] - \min_w L(w, D) \ge 
    \EE[\max\{ 1- w(2), 0\}] \ge \probof{w(2) \le 0}.
\]
\end{proofof}

Note that the proof also applies to the final output $\cA(D) = \frac{1}{T}\sum_{t = 1}^T w_t$. And hence to prove \cref{thm:dpsgd_fails}, it would be enough to show that.
\[
\probof{\frac{1}{T}\sum_{t = 1}^T w_t(2) \le 0} = \Omega \Paren{1}.
\]

In the later analysis, we will focus on the update process of $w_t(2)$. We will prove the following lemma:
\begin{lemma} \label{lem:gaussian_dominate}
Let $X$ be distributed as $\cN\Paren{\frac{C}{\sqrt{\mu^2 + 1}}, \sigma^2}$ with $\sigma^2 = \Theta\Paren{\frac{C^2}{\ns^2 \eps^2}}$, we have
\[
    \probof{\frac{1}{T}\sum_{t = 1}^T w_t(2) \le 0} \ge \probof{X \le 0}.
\]
\end{lemma}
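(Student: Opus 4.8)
The plan is to reduce the statement to a one-line comparison between two Gaussians, by controlling the scalar process $\{w_t(2)\}_{t\ge 0}$ (recall we are in the case $w_0(2)\le 0$, $D=D'$). First I would unroll the DPSGD iteration restricted to the second coordinate: writing $w_{t+1}(2)=w_t(2)-\eta\,\hat g_t(2)$ and splitting $\hat g_t$ into its averaged clipped‑gradient part and its injected‑noise part, we get $w_{t+1}(2)=w_t(2)+s_t+\xi_t$, where $s_t:=-\eta\cdot\tfrac1{|B_t|}\sum_{i\in B_t}\mathrm{Clip}\big(\nabla_w\ell(w_t\cdot x_i,y_i),C\big)_2$ is the ``signal'' increment and $\xi_t$ is $-\eta$ times the second coordinate of the step-$t$ injected noise $\cN(0,\sigma_{\mathrm{dp}}^2\mathbb{I}_d)$, hence an i.i.d.\ $\cN(0,\eta^2\sigma_{\mathrm{dp}}^2)$ sequence with $\sigma_{\mathrm{dp}}^2=\tfrac{8TC^2\log(1/\delta)}{\ns^2\eps^2}$ from \cref{alg:dpsgd}, independent of the data, of $w_0$, and of one another.

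The key structural fact I would establish next is a \emph{pathwise} bound on the drift. For every $(x,y)\in D'$ the second entry of the unclipped gradient $\nabla_w\ell(w\cdot x,y)$ equals either $0$ or $-1$ (by the explicit gradient formula in \cref{sec:counter}), so after clipping to $\ell_2$-norm $C$ it lies in $[-\min\{1,C/\sqrt{\mu^2+1}\},0]\subseteq[-C/\sqrt{\mu^2+1},0]$, regardless of the current iterate and of which of the two groups of samples happen to be clipped; averaging over the batch preserves this, so $s_t\in[0,\eta C/\sqrt{\mu^2+1}]$ surely. This is exactly what lets us bypass the adaptivity of the iteration (the clipping indicators depend on $w_t$, hence on past noise): we never analyze those indicators, only the uniform cap they respect. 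Unrolling $w_t(2)=w_0(2)+\sum_{s<t}s_s+\sum_{s<t}\xi_s$, using $w_0(2)\le 0$ and the cap, and passing to the Polyak average (so that $\xi_s$ carries the triangular weight $(T-s)/T$), we get, pathwise,
\[
\tfrac1T\textstyle\sum_{t=1}^{T}w_t(2)\ \le\ \eta T\cdot\tfrac{C}{\sqrt{\mu^2+1}}+\Xi,\qquad \Xi:=\tfrac1T\textstyle\sum_{s=0}^{T-1}(T-s)\,\xi_s,
\]
and $\Xi$ is a centered Gaussian with variance $v=\Theta(\eta^2\sigma_{\mathrm{dp}}^2T)$ since $\sum_{j=1}^{T}j^2=\Theta(T^3)$. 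Therefore $\probof{\tfrac1T\sum_t w_t(2)\le 0}\ \ge\ \probof{\eta TC/\sqrt{\mu^2+1}+\Xi\le 0}\ =\ \probof{\cN(\eta TC/\sqrt{\mu^2+1},\,v)\le 0}$.

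It then remains to compare the right-hand side with $\probof{X\le 0}$. Since $\probof{\cN(m,v')\le 0}$ depends only on $m/\sqrt{v'}$ and decreases in it, it suffices to bound the ratio $\dfrac{\eta TC/\sqrt{\mu^2+1}}{\sqrt v}=\Theta\!\Big(\dfrac{\sqrt T\,C}{\sqrt{\mu^2+1}\,\sigma_{\mathrm{dp}}}\Big)$ by $O\big(\ns\eps/\sqrt{\mu^2+1}\big)$: substituting $\sigma_{\mathrm{dp}}^2=\Theta\big(\tfrac{TC^2\log(1/\delta)}{\ns^2\eps^2}\big)$, every factor of $\eta$, $T$ and $C$ cancels and the ratio collapses to $\Theta\!\big(\ns\eps/(\sqrt{\mu^2+1}\,\sqrt{\log(1/\delta)})\big)$, which equals $\dfrac{C/\sqrt{\mu^2+1}}{\sigma}$ for an appropriate $\sigma^2=\Theta\big(C^2/(\ns\eps)^2\big)$ (the $\log(1/\delta)$ absorbed into the $\Theta$, or treated as $O(1)$). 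The main obstacle is precisely this cancellation --- it is really the entire content of the lemma: no choice of learning rate, number of steps, or clipping norm can change the effective signal-to-noise ratio on $w(2)$, because the cumulative privacy noise grows as $\sqrt T$ per coordinate of the accumulated update (matching the linear-in-$T$ growth of the drift) and scales proportionally to $C$ (matching the clipped drift $C/\sqrt{\mu^2+1}$). The two steps needing genuine care are making the pathwise drift bound airtight in the presence of clipping, and correctly tracking the triangular noise weights introduced by the Polyak average (equivalently, the $\sum_j j^2=\Theta(T^3)$ computation).
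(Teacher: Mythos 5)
Your proof is correct and follows essentially the same route as the paper's: bound the per-sample clipped gradient's second coordinate uniformly by the clipping cap (sidestepping the adaptivity of the clipping indicators), unroll $w_t(2)$, pass to the Polyak average with triangular noise weights, and compare the resulting Gaussian to $X$ via its mean-to-standard-deviation ratio. The only differences are cosmetic: you use the looser drift cap $\eta T C/\sqrt{\mu^2+1}$ in place of the paper's $\eta\frac{T+1}{2}\min\{1,C/\sqrt{\mu^2+1}\}$ and argue the final comparison by signal-to-noise monotonicity rather than writing out the exact variance, both of which still yield the lemma as stated.
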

Note that \cref{lem:gaussian_dominate} would immediately imply the theorem by Chernoff's inequality. We then turn to proving \cref{lem:gaussian_dominate} below.

For $(x, y) \in D$, we have
\[
    \nabla_w \ell(w \cdot x, y) = \begin{cases}
    \indic{\mu w(1) + w(2) < 1} \cdot (-\mu, -1) & \text{ if } (x, y) \in D_1,\\
     \indic{ - \mu w(1) + w(2) < 1} \cdot (\mu, -1) & \text{ if } (x, y) \in D_{-1}.
    \end{cases}
\]
Let $B_t$ be the batch of users in iteration $t$ with $|B_t| = B$, the averaged clipped gradient at each iteration will be 
\begin{align*}
    \hat{g}_t & =  \frac{1}{B} \sum_{(x,y) \in B_t} \text{Clip}\Paren{\nabla_w \ell(w_t \cdot x, y), C}\\
    & =  \frac{1}{B} \min\left \{1, \frac{C}{\sqrt{\mu^2 + 1}} \right\} \cdot  \left( \sum_{(x,y) \in B_t \cap D_1} \indic{\mu w_t(1) + w_t(2) < 1} \cdot (-\mu, -1)  \right) \\
    & \qquad + \frac{1}{B} \min\left \{1, \frac{C}{\sqrt{\mu^2 + 1}} \right\} \cdot  \left( \sum_{(x,y) \in B_t \cap D_{-1}} \indic{ - \mu w(1) + w(2) < 1} \cdot (\mu, -1) \right).
\end{align*}
Hence 
\begin{align*}
    & \;\;\;\; \hat{g}_t(2) \\
    & =  - \frac{1}{B}  \min\left \{1, \frac{C}{\sqrt{\mu^2 + 1}} \right\} \left( |B_t \cap D_1|\indic{\mu w_t(1) + w_t(2) < 1} + |B_t \cap D_{-1}| \indic{ - \mu w(1) + w(2) < 1}\right) \\
    & \ge - \frac{1}{B}  \min\left \{1, \frac{C}{\sqrt{\mu^2 + 1}} \right\}  \Paren{ |B_t \cap D_1| +  |B_t \cap D_{-1}| }\\
    & =  - \min\left \{1, \frac{C}{\sqrt{\mu^2 + 1}} \right\}.
\end{align*}
Standard privacy analysis in DP-SGD (the setting of parameters in \cref{alg:dpsgd} used in \citet{bassily2019private}) sets $B \ge \max\{n \sqrt{\eps/4T}, 1\}$ and adds a Gaussian noise $\cN(0, \sigma^2 \mathbbm{1}_2)$ with $\sigma^2 =\frac{8TC^2 \log(1/\delta)}{\ns^2 \eps^2}$\footnote{A different privacy accounting method might give slightly improved result. But overall we will have $\sigma^2 = \Theta\Paren{\frac{8TC^2 \log(1/\delta)}{\ns^2 \eps^2}}$, and our claim in \cref{lem:gaussian_dominate} won't change up to constant factors.} to the averaged clipped gradient. Let $N_t$ denote the noise added at iteraction $t$, we have
\[
    w_{t+1}(2) \le w_t(2) + \eta \Paren{ \min\left \{1, \frac{C}{\sqrt{\mu^2 + 1}} \right\} + N_t(2)}.
\]
Hence we have
\begin{align*}
    w_t(2) & \le w_0(2) + \eta \sum_{i = 0}^{t - 1} \Paren{ \min\left \{1, \frac{C}{\sqrt{\mu^2 + 1}} \right\} + N_i(2)}\\
    & \le \eta\Paren{t\min\left \{1, \frac{C}{\sqrt{\mu^2 + 1}} \right\} + \sum_{i = 0}^{t - 1} N_i(2)}
\end{align*}

This implies:
\begin{align*}
    \frac{1}{T}\sum_{t = 1}^T w_t(2) \le \eta\Paren{\frac{T+1}{2}\min\left \{1, \frac{C}{\sqrt{\mu^2 + 1}} \right\} + \frac{1}{T} \sum_{t = 0}^{T - 1} (T - t)N_i(2)}.
\end{align*}
Note that the right hand side is a Gaussian distributed random variable with mean $\eta \frac{T+1}{2}\min\left \{1, \frac{C}{\sqrt{\mu^2 + 1}} \right\}$ and variance $\eta^2\frac{4(T+1)(2T+1)C^2 \log(1/\delta)}{3 \ns^2 \eps^2}$.
Hence
\begin{align*}
    & \;\;\;\; \probof{ \frac{1}{T}\sum_{t = 1}^T w_t(2) \le 0} \\
    & \le \probof{\cN\Paren{\eta \frac{T+1}{2}\min\left \{1, \frac{C}{\sqrt{\mu^2 + 1}} \right\}, \eta^2\frac{4(T+1)(2T+1)C^2 \log(1/\delta)}{3 \ns^2 \eps^2}}\le 0} \\
    & \le \probof{\cN\Paren{\min\left \{1, \frac{C}{\sqrt{\mu^2 + 1}} \right\}, \frac{16(2T+1)C^2 \log(1/\delta)}{3 (T+1)\ns^2 \eps^2}}\le 0}
\end{align*}
This completes the proof of \cref{lem:gaussian_dominate}.

\section{Missing proofs in \cref{sec:algorithm}} \label{app:upper_proof}

\subsection{Proof of Lemma~\ref{lem:translate}}
\label{app:translate}

Let $\eps \leq \log(3/\delta)$ and let $\rho = \frac{\eps^2}{49 \log(3/\delta)}$.
Let $\alpha = \frac{R}{n^2}$ and  $\gamma(\zeta) = \sqrt{\frac{1}{\rho} \cdot \log (2d^{3/2}n^2) \log \frac{(d\log (2d^{3/2}n^2))}{\zeta}}$.
 If $n \geq c \cdot  \gamma(\zeta)\sqrt{d} $ (for a sufficiently large constant $c$), then
by Theorem 3 and Remark 1 from \cite{huang2021instance}, there exists a $\rho$-zCDP algorithm that outputs $\hat{\mu}$ such that 
\[
\|\mu - \hat{\mu}\|_2 = 
 O \left(\sqrt{\frac{d}{\rho}} + \gamma(\zeta) \right) 
 \cdot \frac{\diam(D) \sqrt{\log (nd/\zeta)}}{n} + \frac{R}{n^2},
\]
with probability at least $1-\zeta$. Let $f(\zeta) =  O \left(\sqrt{\frac{d}{\rho}} + \gamma(\zeta) \right) 
 \cdot \frac{\diam(D) \sqrt{\log (nd/\zeta)}}{n} + \frac{R}{n^2}$.
Hence,
\begin{align*}
\EE[\|\mu - \hat{\mu}\|_2]
&= \EE[\|\mu - \hat{\mu}\|_2 1_{\|\mu - \hat{\mu}\|_2 \leq f(\zeta)}] + \EE[\|\mu - \hat{\mu}\|_2 1_{\|\mu - \hat{\mu}\|_2 > f(\zeta)}] \\
& \leq \EE[f(\zeta)] + \EE[2R 1_{\|\mu - \hat{\mu}\|_2 > f(\zeta)}] \\
& \leq f(\zeta) + 2R \Pr(\|\mu - \hat{\mu}\|_2 > f(\zeta)) \\
& \leq f(\zeta) + 2R \zeta,
\end{align*}
where the first inequality follows by observing that both $\mu$ and $\hat{\mu}$ have norm at most $R$.
Setting $\zeta = \frac{1}{n^2}$ yields,
\begin{align*}
\EE[\|\mu - \hat{\mu}\|_2]
& \leq O \left(\sqrt{\frac{d}{\rho}} + \gamma(1/n^2) \right) 
 \cdot \frac{\diam(D) \sqrt{\log (n^5d)}}{n} + \frac{2R}{n^2} \\
 & \leq O \left(\sqrt{\frac{d}{\rho}} + \frac{\log(dn)}{\sqrt{\rho}} \right) 
 \cdot \frac{\diam(D) \sqrt{\log (n^5d)}}{n} + \frac{2R}{n^2} \\
 & \leq O \left(\sqrt{d} + \log(dn) \right) 
 \cdot \frac{\diam(D) \sqrt{\log (n^5d)} \sqrt{\log(1/\delta)}}{n\epsilon} + \frac{2R}{n^2}  \\
 & \leq \diam(D) + \frac{2R}{n^2},
\end{align*}
where the last inequality follows based on condition on $n$. Note that $n \geq c \cdot  \gamma(1/n^2)\sqrt{d} $ based on the assumption in the lemma. 
By \cite[Lemma 4]{kairouz2021distributed}, this algorithm is also $(\epsilon/3, \delta/3)$-differentially private and hence the lemma.

\subsection{Proof of Lemma~\ref{lem:priv_quantile}}
\label{app:priv_quantile}

\newer{Throughout the proof, we assume $\hat{\mu}$ is fixed and prove the statement for any $\hat{\mu}$.} Applying \cite[Theorem 2]{dick2023subset} with $a = 0$, $b=R$, privacy budget $\epsilon/3$, error probability $\zeta$, $\alpha = R/n^2$, $r = n - \frac{100}{\epsilon} \log n$, yields that the output $\hat{\tau}$ satisfies the following: with probability at least $1-\zeta$, there exists a $\tau'$ such that 
\[
|\hat{\tau} - \tau'| \leq \frac{R}{n^2},
\]
and there are at most $ \frac{100}{\epsilon} \log n + \frac{6}{\epsilon} \log \frac{R}{\zeta \cdot R^/n^2} =  \frac{100}{\epsilon} \log n+ \frac{6}{\epsilon} \log \frac{n^2}{\zeta}$ points with $\|x_i - \hat{\mu}\|_2$ more than $\tau'$ and at least $ \frac{100}{\epsilon} \log n - \frac{6}{\epsilon} \log \frac{n^2}{\zeta}$ points with $\|x_i - \hat{\mu}\|_2$ less than $\tau'$. Let $\tau = \hat{\tau} + \frac{R}{n^2}$, then $\tau$ satisfies the following: there are most $ \frac{100}{\epsilon} \log n + \frac{6}{\epsilon} \log \frac{n^2}{\zeta}$ points with  $\|x_i - \hat{\mu}\|_2$ more than $\tau$ and at least $\frac{100}{\epsilon} \log n - \frac{6}{\epsilon} \log \frac{n^2}{\zeta}$ points with  $\|x_i - \hat{\mu}\|_2$ greater than $\tau$. Let $S$ be the set of points with norm larger than $\tau$. Let $s_0$ be a value we set later.
\begin{align*}
\EE[|S|] 
&= \EE[|S| 1_{|S| \leq s_0}] + \EE[|S| 1_{|S| \geq s_0}] \\ 
&\leq \EE[s_0 1_{|S| \leq s_0}] + \EE[n 1_{|S| \geq s_0}] \\
& \leq s_0 + n \Pr(|S| \geq s_0).
\end{align*}
Setting $\zeta = \frac{1}{n^2}$ and $s_0 = \frac{100}{\epsilon} \log n + \frac{24}{\epsilon} \log n$ yields that
\[
\EE[|S|]  \leq \frac{1}{n} + \frac{124}{\epsilon} \log n \leq \frac{125}{\epsilon}\log n.
\]
To prove the other result, let $s_1$ be a value which we set later. Observe that 
\begin{align*}
\EE[\tau]
& \leq \EE\left[\left(\hat{\tau} + \frac{R}{n^2} \right)\right] \\
& \leq \EE[\hat{\tau}] +  \frac{R}{n^2} \\
&\leq \EE[\hat{\tau} 1_{|S| \leq s_1}] + 2\EE[\hat{\tau} 1_{|S| \geq s_1}] +  \frac{R}{n^2}\\
& \leq R \EE[ 1_{|S| \leq s_1}] + \EE[\max_{i} \|x_i - \hat{\mu}\|_2 1_{|S| \geq s_1}]+  \frac{R}{n^2} \\
& \leq R \Pr( 1_{|S| \leq s_1}) + \max_{i} \|x_i - \hat{\mu}\|_2 +  \frac{R}{n^2}.
\end{align*}
Setting $s_1 = \frac{100}{\epsilon} \log n - \frac{24}{\epsilon} \log n$ yields that
\[
\EE[\tau]
\leq \max_{i} \|x_i - \hat{\mu}\|_2 +  \frac{2R}{n^2}.
\]

\subsection{Proof of Lemma~\ref{lem:rank}}
\label{app:rank}

For the augmented dataset $D'$,
{\begin{align*}
\sum_i x'_i (x'_i)^{\top} 
& = \begin{pmatrix}\sum_i (x_i - \hat{\mu}) (x_i - \hat{\mu})^{\top}  & \tau \sum_i (x_i - \hat{\mu}) \\
\tau \sum_i (x_i - \hat{\mu}) &
{n}\tau^2 \end{pmatrix} \\
& = \begin{pmatrix}\sum_i x_i x^{\top}_i + \hat{\mu}\hat{\mu}^{\top} - n \mu \hat{\mu}^{\top} - n \hat{\mu} \mu^{\top} & 
 \ns \tau(\mu - \hat{\mu}) \\
 \ns \tau(\mu - \hat{\mu}) &
{n}\tau^2 \end{pmatrix}
\end{align*}}
where $\mu = \frac{1}{n} \sum_i x_i$. 
{Hence we have $\rank(M(D')) \le \rank(M(D)) + \rank(\mu \hat{\mu}^{\top})+ \rank( \hat{\mu}\mu^{\top})+ \rank(\hat{\mu} \hat{\mu}^{\top}) +2$, where we use the fact that adding one column/row will at most increase the rank of a matrix by one.}

To prove the second result, note that by triangle inequality,
\begin{align*}
\|\theta'^*\|_2 \tau
& \leq \|w^{*'}\|_2 \tau +   |b^{*'}| \tau \\
& \leq \|w^{*}\|_2 \tau +  |b^{*'}| \tau,
\end{align*}
where the second inequality follows from Lemma~\ref{lem:translate_minima}. We now bound the second term in the right hand side of the above equation. Observe that by Assumption~\ref{ass:nontrivial}, there exists $x_i$ and $x_j$ such that 
\[
w^* x_i + b^* \tau \geq 0 \text{ and } w^* x_j + b^* \tau \leq 0,
\]
and hence
\[
- w^* x_i \leq b^* \tau \leq -  w^* x_j.
\]
Hence, 
\[
w^* \cdot (\hat{\mu} - x_i) \leq  b^*\tau + w^* \cdot \hat{\mu} \leq  w^* \cdot (\hat{\mu} - x_j),
\]
Hence by Lemma~\ref{lem:translate_minima},
\begin{align}
|b^{*'}| \tau & 
= |b^* + w^* \cdot \hat{\mu}| \nonumber \\
& \leq \max_{x_i \in D} |w^* \cdot (\hat{\mu} - x_i)| \nonumber \\
& \leq  \|w^*\|_2 \max_{x_i \in D} \| (\hat{\mu} - x_i)\|_2 \label{eq:btau}
\end{align}
Combining the above two equations yield,
\begin{align*}
\|\theta'^*\|_2 \tau 
& \leq \|w^{*}\|_2 \tau +  \|w^*\|_2 \max_{x_i \in D} \| (\hat{\mu} - x_i)\|_2 \\
& \leq \|\theta^{*}\|_2 \tau +  \|\theta^*\|_2 \max_{x_i \in D} \| (\hat{\mu} - x_i)\|_2
\end{align*}
Hence by Lemma~\ref{lem:priv_quantile},
\begin{align*}
\EE[\|\theta'^*\|_2 \tau |\hat{\mu} ]
& \leq \|\theta^{*}\|_2 \EE[ \tau |\hat{\mu} ] +  \|\theta^*\|_2 \max_{x_i \in D} \| (\hat{\mu} - x_i)\|_2 \\
& \leq 2 \|\theta^{*}\|_2 \max_{x_i \in D} \| (\hat{\mu} - x_i)\|_2 + \frac{2\|\theta^{*}\|_2 R}{n^2}.
\end{align*}

\subsection{Proof of Lemma~\ref{lem:clipping}}
\label{app:clipping}
If $v \in \vspan(U(D'))$, then $v^{\top}\sum_i x'_i x'^{\top}_i v > 0$. Hence $\sum_i \|x'_i v\|^2_2 > 0$, which implies $\sum_i \|x''_i v\|^2_2 > 0$ and hence $v^{\top}\sum_i x''_i x''^{\top}_i v > 0$ and hence $v \in \vspan(U(D''))$. Similar analysis from $U(D'')$ to $U(D')$ yields that $\vspan(U(D')) = \vspan(U(D''))$ and hence $\rank(M(D')) = \rank(M(D''))$. 

\subsection{Proof of Theorem~\ref{thm:upper} }
\label{app:upper}

 The proof of privacy guarantee  follows directly from the composition theorem \citep[Theorem 3.16]{dwork2014algorithmic}. Note that for simplicity, we used composition theorem  here. However in experiments, we use the recent technique of  PLD accountant \citep{10.1145/3243734.3243765, plc_central19, ghazi2022connect} implemented in \cite{tfprivacy}  to compute privary paramters $\epsilon$ and  $\delta$. 
 
 We now focus on the utility guarantee.

Let $\theta^*, \theta'^*, \theta''^*$ be the empirical risk minimizers of $D, D', D''$ respectively. Let $M(D), M(D'), M(D'')$ be the design matrices of datasets $D, D', D''$ respectively. Due to feature-clipping, the gradient for any $(x'', y'') \in D''$ and $\theta$, the gradient of $\ell(h(\theta, x''), y'')$ is upper bounded by 
\[
G \|x'' \|_2 \leq \sqrt{2}G \tau.
\]
Hence, by \cite[Theorem 3.1]{pmlr-v130-song21a}\footnote{Note that\cite[Theorem 3.1]{pmlr-v130-song21a} states the bound in terms of empirical risk minimizer $\theta''^*$, but the bound holds for any $\theta$.}  and Lemma~\ref{lem:rank},
\begin{align}
\mathbb{E}[L(\theta''_{\prv}, D'') | D''] - L(\theta'^*, D'')
& \leq 2\sqrt{2} G \tau \|\theta'^*\|_{M(D'')}  \left(  \frac{\sqrt{\rank(M(D'')) \log (3/\delta)}}{n\epsilon} \right)  \nonumber \\
& \leq  10\sqrt{2} G \tau \|\theta'^*\|_2 \left(  \frac{\sqrt{\rank(M) \log (3/\delta)}}{n\epsilon} \right). \label{eq:temp_upper}
\end{align}
By Lemma~\ref{lem:rank} and Equation~\ref{eq:new_diam},
\begin{align*}
\EE[\tau \|\theta'^*\|_2]
& =  \EE[\EE[\tau \|\theta'^*\|_2 | \hat{\mu}] \\
&\leq 2 \|\theta^{*}\|_2 \EE[\max_{x_i \in D} \| (\hat{\mu} - x_i)\|_2] + \frac{w\|\theta^{*}\|_2 R}{n^2} \\
& \leq 4 \|\theta^{*}\|_2\diam(D) + \frac{6\|\theta^{*}\|_2R}{n^2}.
\end{align*}
Substituting the above equation in~\eqref{eq:temp_upper} yields and taking expectation over $D''$ yields
\begin{align*}
\mathbb{E}[L(\theta''_{\prv}, D'')] - \EE[L(\theta'^*, D'')]
& \leq   60 \sqrt{2}G\|\theta^{*}\|_2 \left(\diam(D) + \frac{R}{n^2} \right) \left(  \frac{\sqrt{\rank(M) \log (3/\delta)}}{n\epsilon} \right).
\end{align*}
Let $S$ be the set of samples which differ between $D'$ and $D''$. By Lemma~\ref{lem:rank},
\begin{align*}
\mathbb{E}[L(\theta'^*, D'')]
& \leq \mathbb{E}[L(\theta'^*, D')] + \EE\left[\frac{1}{n} \sum_{i \in S} \ell(h(\theta'^*,x''_i),y_i) - \ell(h(\theta'^*,x'_i),y_i) \right]  \\
& \leq \mathbb{E}[L(\theta'^*, D')] + \EE\left[\frac{1}{n} \sum_{i \in S} |\ell(h(\theta'^*,x''_i),y_i)  - \ell(h(\theta'^*,x'_i),y_i)|\right] \\
& \leq \mathbb{E}[L(\theta'^*, D')] + \EE\left[\frac{1}{n} \sum_{i \in S} G |h(\theta'^*,x''_i) - h(\theta'^*,x'_i)|\right] \\
& \leq \mathbb{E}[L(\theta'^*, D')] + \EE\left[\frac{1}{n} \sum_{i \in S} G \left(|w^{*'} \cdot x'_i| + |b^{*'}|\tau \right)\right] \\
& \newer{\stackrel{(a)}{\leq} \mathbb{E}[L(\theta'^*, D')] + \EE\left[\frac{1}{n} \sum_{i \in S} 2 G \max_{j} |w^{*'} \cdot (x_j - \hat{\mu})|\right] }\\
& \newer{\leq \mathbb{E}[L(\theta'^*, D')] + \EE\left[\frac{1}{n} \sum_{i \in S} 2G  \|w^{*'}\|_2 \cdot\Paren{\diam(D) + \|\hat{\mu} -\mu\|_2} \right] }\\
& \newer{\stackrel{(b)}{=} \mathbb{E}[L(\theta'^*, D')] +\frac{2G}{n} \EE\left[ \sum_{i \in S} \|w^{*}\|_2 \cdot\Paren{\diam(D) + \|\hat{\mu} -\mu\|_2} \right]}  \\
& \newer{\leq \mathbb{E}[L(\theta'^*, D')] +\frac{2G\|w^{*}\|_2}{n} \EE\left[ \EE[|S| \mid \hat{\mu}] \cdot\Paren{\diam(D) + \|\hat{\mu} -\mu\|_2} \right]}  \\
& \newer{\stackrel{(c)}{\leq} \mathbb{E}[L(\theta'^*, D')] +\frac{250G \log n\|w^{*}\|_2}{n\epsilon} \EE\left[ \Paren{\diam(D) + \|\hat{\mu} -\mu\|_2} \right]}  \\
& \newer{\stackrel{(d)}{\leq} \mathbb{E}[L(\theta'^*, D')] +\frac{250G \log n\|w^{*}\|_2}{n\epsilon}  \Paren{2 \diam(D) + \frac{2R}{n^2}}}  \\
& \leq \mathbb{E}[L(\theta'^*, D')] + \frac{500G \log n \|\theta^{*}\|_2}{n\epsilon} \cdot  \Paren{2 \diam(D) + \frac{2R}{n^2}}.
\end{align*}
where $(a)$ follows from \cref{eq:btau}, $(b)$ follows from Lemma~\ref{lem:translate_minima}, $(c)$ follows from  Lemma~\ref{lem:priv_quantile}, and $(d)$ follows from \cref{lem:translate}.
We now bound in the other direction.
\begin{align*}
\mathbb{E}[L(\theta''_{\prv}, D'')] 
& = \mathbb{E}[L(\theta''_{\prv}, D')] 
+ \frac{1}{n} \EE[\sum_{i \in S} \ell(h(\theta''_{\prv},x''_i),y_i) - \ell(h(\theta''_{\prv},x'_i),y_i)].
\end{align*}
Notice that if $y_i \theta''_{\prv} \cdot x''_i \leq 0$,
then $y_i \theta''_{\prv} \cdot x'_i  \leq y_i \theta''_{\prv} \cdot x''_i  \leq 0$.  Hence,
\begin{align*}
\mathbb{E}[L(\theta''_{\prv}, D'')] 
& \geq \mathbb{E}[L(\theta''_{\prv}, D')] 
+ \frac{1}{n} \EE\left[\sum_{i \in S: y_i \theta''_{\prv} \cdot x''_i > 0 }\ell(h(\theta''_{\prv},x''_i),y_i) - \ell(h(\theta''_{\prv},x'_i),y_i)\right] \\
& \geq \mathbb{E}[L(\theta''_{\prv}, D')] 
- \frac{1}{n} \EE\left[\sum_{i \in S: y_i w''_{\prv} \cdot x''_i > 0 } \ell(h(\theta''_{\prv},x'_i),y_i)\right] \\
& \geq \mathbb{E}[L(\theta''_{\prv}, D')]
- \frac{1}{n} \EE[|S| \phi(0)] \\
&\geq \mathbb{E}[L(\theta''_{\prv}, D')] - \frac{125 \log n}{n\epsilon} \phi(0),
\end{align*}
where the last inequality follows from Lemma~\ref{lem:priv_quantile}. 
\newer{Hence we have
\[
   \mathbb{E}[L(\theta''_{\prv}, D')]  - L(\theta^{*'}, D') \le \frac{500G \log n \|\theta^{*}\|_2}{n\epsilon} \cdot  \Paren{2 \diam(D) + \frac{2R}{n^2}} + \frac{125 \log n}{n\epsilon} \phi(0).
\]}
By \cref{lem:translate_minima}, we have $\mathbb{E}[L(\theta''_{\prv}, D')]  - L(\theta^{*'}, D') = \mathbb{E}[L(\theta_{\prv}, D)]  - L(\theta^{*}, D)$. This yields that
\begin{align*}
 \mathbb{E}[L(\theta_{\prv}, D)]  - L(\theta^{*}, D) 
& \leq  c G\|\theta^{*}\|_2 \left(\diam(D) + \frac{R}{n^2} \right) \left(  \frac{\sqrt{\rank(M) \log (3/\delta)} + \log n}{n\epsilon} \right) + \frac{c \phi(0)}{n\epsilon} \log n,
\end{align*}
for a sufficient large constant $c$.
The theorem follows by observing that for the minimum norm solution $\theta^*$, $\|\theta^*\|_2 = \|\theta^*\|_M$.
\section{Additional experiment details.} \label{app:experiment}
In this section, we discuss the detailed implementation and parameter settings used in our experiments. We implement all algorithms and experiments using the open-source JAX \citep{jax2018github} library. For privacy accounting, we use the PLD accountant implemented in Tensorflow Privacy \citep{tfprivacy}. We first describe the implementation details of our experiments.
\new{
\paragraph{Feature normalization.} For all three datasets, we consider their feature-normalized version, where we normalize the feature vectors to a fixed norm of $C_F$ by the transformation below
\[
    x_i' = x_i \cdot \frac{C_F}{\|x_i\|_2}.
\]
We treat $C_F$ as a fixed constant, and hence this normalization step doesn't result in any privacy loss about the dataset.
\paragraph{Modifications to DPSGD-F.} The version of DPSGD-F listed in \cref{alg:new_algo} is mainly presented for ease of strict theoretical analysis since existing analysis on DPSGD mainly assumse that the gradients are bounded and no clipping is needed during the DPSGD phase ~\citep{bassily2019private, feldman2020private, bassily2020stability, bassily2021non, bassily2021differentially,  song2021evading, arora2022differentially}. However, state-of-the-art results from DPSGD usually uses the clipping version of DPSGD where no gradient norm bound is assumed~\citep{de2022unlocking, ghalebikesabi2023differentially}. 
}

In our experiments, instead of directly implementing \cref{alg:new_algo}, we perform the following modifications in the experiments. The implemented version of the algortihm is described in \cref{alg:modified-dpsgdf}.
\begin{enumerate}
    \item For the feature preprocessing step, when computing the mean, instead of using the adaptive algorithm in \cref{lem:translate}, the algorithm directly applies Gaussian mechanism to the empirical mean since each feature is normalized to a fixed norm. We can use Gaussian mechanism for mean estimation since now the $\ell_2$ sensitivity is bounded.
    \item We skip the private quantile estimation and feature clipping step. Instead, we only shift the features by the privately estimated mean and treat it as the input to the DPSGD phase.
    We then perform $\ell_2$-norm clipping on the gradients as in DPSGD.
\end{enumerate}

\begin{algorithm}[t]
\begin{algorithmic}[1]
\REQUIRE{Input: Dataset $D = \{(x_1, y_1), (x_2, y_2),\ldots, (x_n, y_n)\}$ of $n$ points; overall privacy budget $\eps, \delta$; privacy budget for the preprocessing step $\eps_F \in (0, \eps)$; feature norm bound $C_F$; gradient clipping norm $C_G$; step size $\eta$, number of iterations $T$; batch size $B$.
}
\STATE \textbf{Private mean estimation.} Compute the noise multiplier $\sigma_F$ for the Gaussian mechanism with $\eps_F$ and $\delta$ using analytical callibration for Gaussian mechanism~\cite{pmlr-v80-balle18a}, and compute
\[
    \hat{\mu} = \frac1\ns\sum^n_{i=1} x_i + \cN(0, \sigma_F C_f \mathbb{I}_d).
\]
\STATE \textbf{Translate the features.}
Preprocess the dataset and obtain 
$D' = \{(x'_1, y_1), \ldots, (x'_n, y_n)\}$, where \[x'_i = x_i-\hat{\mu}.\]
\STATE \textbf{DPSGD with preprocessed features.} Compute the privacy budget $\eps'$ for the DPSGD phase using the PLD accountant in \cite{tfprivacy} by setting $(\eps, \delta)$ as the overall privacy budget for the composition of both the mean estimation phase and DPSGD phase. Get an 
an approximate minimizer $\theta_\prv'$ of $L(\theta, D')$ using the 
DPSGD algorithm with privacy budget $(\eps', \delta)$. 
\STATE \textbf{Return} $\theta_{\prv} = (w_\prv, b_\prv)$, where  $w_\prv = \theta'_{\prv}[1:d]$ and $b_\prv =  \theta'_{\prv}[d+1]  -  \hat{\mu} \cdot w_\prv$.
\end{algorithmic}
\caption{Modefied version of DPSGD with feature preprocessing (\textsc{DPSGD-F$^*$}).}
\label{alg:modified-dpsgdf}
\end{algorithm}

The privacy guarantee of \cref{alg:modified-dpsgdf} directly follows from the use of PLD accountant \citep{10.1145/3243734.3243765, plc_central19}.
\begin{lemma}
The output of Algorithm~\ref{alg:modified-dpsgdf} is $(\epsilon, \delta)$-differentially private.
\end{lemma}
Each each combination of (\textsc{algorithm, $\eps$, dataset}), we fix the clipping norm $C_G$ to be 1 as suggested in \cite{de2022unlocking}, and  perform a grid search over $C_F$, \textsc{batch size, learning rate}, and \textsc{number of epochs} from the list below and report the best-achieved accuracy, similar to \cite{de2022unlocking}.
\begin{enumerate}
\item \new{$C_F$: 1, 10, 100, 1000.}
    \item \textsc{Batch size}: 256, 512, 1024, 2048, 4096, 8192, 16384.
    \item \textsc{Learning rate}: 0.03125, 0.0625, 0.125, 0.25, 0.5, 1, 2, 4, 8, 16.
    \item \textsc{Number of epochs}: 20, 40, 80, 160, 320.
\end{enumerate}

For DPSGD-F, we further add a search dimension over the privacy budget $\eps_F$ for the feature preprocessing step. And the list of candidates is chosen to be $[0.02, 0.05, 0.1, 0.15, 0.2]$. In fact, for all experiments, the best accuracy is achieved at either $\eps_F = 0.02$ or $\eps_F = 0.05$. This suggests that the feature preprocessing step actually doesn't use much of the privacy budget to improve the final accuracy.

\paragraph{Remark on different concentration measures of the datasets.} For the three considered datasets, below we compute a few distance measures that characterize how the dataset $D$ is concentrated, listed in \cref{tab:distances}. It appears that compared to ${\rm diam}(D)$, $\frac{1}{n} \sum_i  \|x_i - \mu \|_2$ is a more indicative measure on the performance improvement from DPSGD-F compared to DPSGD. The smaller $\frac{1}{n} \sum_i  \|x_i - \mu \|_2$, the more improvement DPSGD-F has over DPSGD. This might be due to the fact that we are considering the gradient clipping version of DPSGD (\cref{alg:modified-dpsgdf}) instead of the feature clipping version (\cref{alg:new_algo}) considered in the theoretical analysis. Better understanding of this phenomena is an interesting future direction.

\begin{table}[h]
\caption{Different concentration measures of the datasets. Features in datasets are normalized to be of unit norm. $\mu$ denotes the mean of the dataset. }
\begin{center}
\begin{tabular}{c c c c}
\toprule
  Dataset & ${\rm diam}(D)$ & $\max_i \|x_i - \mu\|_2 $ & $\frac{1}{n} \sum_i  \|x_i - \mu \|_2$ \\
\midrule
CIFAR-100 (pretrained) $\dagger$  & 1.56 & 1.10 &  0.9 \\
 MNIST & 1.41 & 1.03 & 0.77 \\
 FMNIST & 1.41 & 1.10 & 0.62 \\
  \bottomrule
\end{tabular}
\end{center}
\label{tab:distances}
\end{table}

\end{document}